\def\eqref#1{equation~\ref{#1}}
\def\1{\bm{1}}
\DeclareMathAlphabet{\mathsfit}{\encodingdefault}{\sfdefault}{m}{sl}
\SetMathAlphabet{\mathsfit}{bold}{\encodingdefault}{\sfdefault}{bx}{n}
\def\gG{{\mathcal{G}}}
\newtheorem{proposition}{Proposition}
\newtheorem{definition}{Definition}
\newcommand{\cmark}{\ding{51}}%
\newcommand{\xmark}{\ding{55}}%
\title{Complete and Efficient Graph Transformers for Crystal Material Property Prediction}
\author{Keqiang Yan, Cong Fu, Xiaofeng Qian\thanks{Equal senior authorship}~, Xiaoning Qian\footnotemark[1]~, Shuiwang Ji\footnotemark[1] \\
% Department of Computer Science and Engineering\\
Texas A\&M University\\
College Station, TX 77843, USA \\
\texttt{\{keqiangyan,congfu,xqian,feng,sji\}@tamu.edu} \\
}
\begin{document}
\dosecttoc
\maketitle

\begin{abstract}
Crystal structures are characterized by atomic bases within a primitive unit cell that repeats along a regular lattice throughout 3D space. The periodic and infinite nature of crystals poses unique challenges for geometric graph representation learning. Specifically, constructing graphs that effectively capture 
the complete 
geometric information of crystals and handle chiral crystals remains an unsolved and challenging problem. In this paper, we introduce a novel approach that utilizes the periodic patterns of unit cells to establish the lattice-based representation for each atom, enabling efficient and 
expressive graph representations of crystals. Furthermore, we propose ComFormer, a SE(3) transformer designed specifically for crystalline materials. ComFormer includes two variants; namely, iComFormer that employs invariant geometric descriptors of Euclidean distances and angles, and eComFormer that utilizes equivariant vector representations.  
Experimental results demonstrate the state-of-the-art predictive accuracy of ComFormer variants on various tasks across three widely-used crystal benchmarks. Our code is publicly available as part of the AIRS library (\url{https://github.com/divelab/AIRS}). 
\end{abstract}

\section{Introduction}

Accelerating the discovery of novel materials with desirable properties has values in
% of great importance
many science, engineering, and biomedical applications~\citep{crystal1,crystal2,crystal3,crystal4,crystal5,liu2021dig,liu2021graphebm,luo2021graphdf,cdvae,fu2023latent,pronet,chanussot2021open,zhang2023artificial}. However, the current reliance on traditional, costly, and time-consuming trial-and-error experimental methods poses practical challenges. In this regard, computational approaches based on quantum mechanics, such as density functional theory (DFT), have made significant contributions for predicting the physical and chemical properties of materials to guide materials discovery experiments. Nevertheless, the high computational cost associated with these methods (ranging from $n^3$ to $n^7$, where $n$ represents the number of atoms) still poses a hurdle for novel materials discovery in the vast materials space. 

To reduce computational cost, recent studies employ machine learning~(ML) models
~\citep{yu2022graphfm,graphormer,gao2019graph,liu2021fast,fu2022lattice,passaro2023reducing,gasteiger2020directional,liao2022equiformer,batzner20223,satorras2021n,qiao2022informing,watson2022broadly,ahmad2018machine}, such as crystal graph representations~\citep{cgcnn,ling2023learning} with deep neural networks. 
However, these crystal graph representations have limitations in distinguishing different crystalline materials. In other words, they cannot guarantee to capture the complete geometric information of input crystal structures, and may map different crystal structures with different properties to the same graph representation and produce the identical property predictions. Illustrative examples and detailed discussions can be found in Appendix~\ref{app:failures}.

While graph representations that can capture any structural differences in small molecules have been investigated in previous works~\citep{wang2022comenet,klicpera2021gemnet}, these methods fail short to capture periodic patterns of crystals and cannot maintain geometric completeness for crystals. Thus, how to construct graph representations for crystalline materials that can distinguish any different crystals
and remain the same under crystal passive symmetries detailed in Sec.~\ref{sec:passive_sym} remains unsolved.

Establishing invariant~\citep{VK1,VK2} and complete representations have also been investigated for other applications, especially representations of matrix forms or consisting of aligned coordinates that can be used to measure the structural differences between crystals. Among them, AMD~\citep{widdowson2022average} and PDD~\citep{widdowson2022resolving} are two recent works that satisfy continuity under small perturbations of atom positions by using Euclidean distances. However, AMD and PDD cannot distinguish chiral crystals and cannot be directly used to predict crystal properties without breaking the completeness.

In this work, we
focus on crystal property prediction and 
propose SE(3) invariant and SO(3) equivariant crystal graph representations to strive for  geometric completeness for crystalline materials. 
% We propose to use periodic patterns of unit cells to establish the lattice-based representation for each atom, resulting in the $O(nk)$ complexity of SE(3) invariant and SO(3) equivariant crystal graph representations, where $n$ denotes the number of nodes (atom) in the graph (crystal unit cell) and $k$ denotes the average node degree. 
Based on these expressive crystal graphs, we develop ComFormer with two variants; namely iComFormer that employs SE(3) invariant crystal graphs, and eComFormer that uses SO(3) equivariant crystal graphs. Both variants can scale to large-scale crystal datasets
with complexity $O(nk)$ where $n$ denotes the number of atoms in the crystal unit cell and $k$ denotes the average number of neighbors. We evaluate the proposed ComFormer variants on three widely-used crystalline material benchmarks. The state-of-the-art performances of ComFormer variants on various tasks across different scales
demonstrate the effectiveness of the proposed crystal graphs and ComFormer variants.

\section{Background and related work}
% We aim to achieve geometric completeness when designing crystal graph representations and machine learning methods for infinite crystal structures. To begin this section, we provide an overview of the crystal structure and the commonly used multi-edge crystal graph representation. We then discuss the challenges associated with achieving geometric completeness for infinite crystal structures, including crystal passive symmetries and modeling complexity. Furthermore, we summarize the existing methods on crystal neural networks and geometric completeness for molecules, demonstrating that these approaches fall short in achieving geometric completeness for crystals.

\subsection{Crystal structures and crystal property prediction}\label{sec:crystalstructure}

Crystal structures consist of a unit cell and a set of atom bases associated with the unit cell. The unit cell is defined by a $3\times3$ lattice matrix indicating how the unit cell repeats itself in 3D space. Following the notations in~\citet{yan2022periodic}, crystal structures can be represented by $\mathbf{M} = (\mathbf{A}$, $\mathbf{P}$, $\mathbf{L})$, where $\mathbf{A}=[\boldsymbol{a}_1, \boldsymbol{a}_2, \cdots, \boldsymbol{a}_n] \in \mathbb{R}^{d_a \times n}$ contains the $d_a$-dimensional feature vectors for $n$ atoms in the unit cell, $\textbf{P} = [\boldsymbol{p}_1, \boldsymbol{p}_2, \cdots, \boldsymbol{p}_n] \in \mathbb{R}^{3 \times n}$ contains the 3D Euclidean positions of $n$ atoms in the unit cell, and the lattice matrix $\textbf{L} = [\boldsymbol{\ell}_1, \boldsymbol{\ell}_2, \boldsymbol{\ell}_3] \in \mathbb{R}^{3\times 3}$ describes the repeating patterns of the unit cell structure in 3D space. Different from molecules and proteins, crystal structures naturally repeat infinitely in space. Formally, the infinite crystal structure can be described as
\begin{equation}
\label{equi_q}
\begin{aligned}
    \hat{\mathbf{P}} = & \{\hat{\boldsymbol{p}_i} | \hat{\boldsymbol{p}_i} = \boldsymbol{p}_i + k_1\boldsymbol{\ell}_1 + k_2\boldsymbol{\ell}_2 + k_3\boldsymbol{\ell}_3,~ k_1, k_2, k_3 \in \mathbb{Z}, i \in \mathbb{Z}, 1 \le i \le n \}, 
    \\
    \hat{\mathbf{A}} = & \{\hat{\boldsymbol{a}_i} | \hat{\boldsymbol{a}_i} = \boldsymbol{a}_i, i \in \mathbb{Z}, 1 \le i \le n \},
\end{aligned}
\end{equation}
where the set $\hat{\mathbf{P}}$ represents all 3D positions for every atom $i$ in the repeated unit cell structure, and the set $\hat{\mathbf{A}}$ represents the corresponding atom feature vector for every atom. In this paper, we aim to predict the target property value $y$, where $y \in \mathbb{R}$ for regression tasks or $y \in \{1, 2, \cdots, C\}$
for classification tasks with $C$ classes by using crystal structure  $(\mathbf{A}$, $\mathbf{P}$, $\mathbf{L})$ as input.

\subsection{Geometric completeness for crystal graphs and challenges}
\label{sec:complete_definition}

% \textbf{Geometric completeness.} Geometric complete representations for 3D structures have the most expressive ability without structure information loss. A representation for a 3D structure is geometric complete if and only if it can fully reconstruct the 3D structure without error.

% The most expressive graphs for 3D structures are the ones achieving geometric completeness, ensuring that no structural information is lost. 
% The definition of Geometric completeness for crystal graphs are as follows.
% A representation is considered geometrically complete if it has the capacity to capture any geometric difference in 3D structures and thus can fully recover the original 3D structures from representations. 

\begin{definition}[Geometrically Complete Crystal Graph] 
\label{def:1}
Following \cite{widdowson2022resolving}, a crystal graph $\gG$ is geometrically complete if $\gG_1 = \gG_2 \to \mathbf{M}_1 \cong \mathbf{M}_2$, where $\cong$ denotes that two crystals are isometric as defined in Appendix~\ref{app:proof}. It means if two crystal graphs $\gG_1$ and $\gG_2$ are the same, the infinite crystal structures represented by $\gG_1$ and $\gG_2$ are identical. 
\end{definition}

A crystal graph $\gG$ is considered geometrically complete if it can distinguish any minor structural differences between different crystalline materials.

\textbf{Challenges.} Achieving geometric completeness for crystals is more challenging than small molecules. Different from molecules, crystals consist of unit cell structures and corresponding periodic patterns, with infinite number of atom positions described by $\hat{\mathbf{P}} = \{\hat{\boldsymbol{p}_i} | \hat{\boldsymbol{p}_i} = \boldsymbol{p}_i + k_1\boldsymbol{\ell}_1 + k_2\boldsymbol{\ell}_2 + k_3\boldsymbol{\ell}_3,~ k_1, k_2, k_3 \in \mathbb{Z}, i \in \mathbb{Z}, 1 \le i \le n \}$. Previous methods for molecular representation learning~\citep{liu2021spherical,klicpera2021gemnet} only consider atom interactions within a radius, and cannot capture the periodic patterns of crystals. As a result, these methods fall short to maintain geometric completeness when extended to crystals as shown in Appendix~\ref{app:failures}.
Furthermore, different from rotation and translation invariances for molecular representation learning, unique crystal passive symmetries arisen from enormous number of choices of unit cell structures and corresponding periodic patterns to describe a crystal structure need to be considered, including unit cell SE(3) invariance, unit cell SO(3) equivariance, and periodic invariance as detailed in Sec.~\ref{sec:passive_sym}.

% than two hundred atoms in the minimum unit cell structure, 
% limiting the the usage of higher order many body interaction features including torsional and dihedral angles~\citep{schnet,Dimenet,liu2021spherical,klicpera2021gemnet} due to expensive computational and message passing costs. 

% \textbf{SE(3) invariance.} Given $\bold{X} \in \mathbb{R}^{3\times n}$ representing a 3D structure where $n$ denotes the number of atoms, 
% SE(3) invariance requires that the output of a geometric neural network $f$ should remain the same when applying arbitrary rotation and translation transformations. Formally, $f(\bold{X}) = f(\bold{R} \bold{X} + \bold{b}),$ where
% $ \bold{R} \in \mathbb{R}^{3\times3}$ and $ |\bold{R}|=1$, $ \bold{b} \in \mathbb{R}^{3}$. 
% While SE(3) equivariant property describes that the output of neural networks should rotate accordingly when a rotation transformation $ \bold{R} \in \mathbb{R}^{3\times3}, |\bold{R}|=1$ is applied to $\bold{X}$, and remain the same when a translation transformation $ \bold{b} \in \mathbb{R}^{3}$ is applied to $\bold{X}$. 
% The major difference between SE(3) and E(3) invariance is that the latter is invariant to reflections and thus cannot distinguish chiral structures.
% , while E(3) invariance guarantees chiral structures to be mapped to the same output and E(3) equivariance guarantees chiral structures to be mapped to the reflected output.

\subsection{Related works}

\textbf{Crystal graph neural networks}. To capture atom interactions across cell boundaries, \citet{cgcnn} proposed CGCNN with multi-edge crystal graphs which use Euclidean distances as edge features. Based on the multi-edge crystal graph, \cite{megnet} and \cite{gatgnn} proposed different model architectures to enhance property prediction accuracy. In addition to Euclidean distances, \cite{alignn} and \cite{m3gnet} proposed to include angle information and build line graphs 
% upon multi-edge crystal graphs 
with complexity $O(nk^2)$ to increase the geometric modeling capacity, where $n$ denotes the number of atoms in the unit cell and $k$ denotes the average number of neighbors for every atom. Recently, Matformer~\citep{yan2022periodic} was proposed to encode periodic patterns by self-connecting edges, and PotNet~\citep{potnet} considers the infinite summations of pairwise atomic interactions with additional computational cost $O(n^2)$. Despite the successes of previous crystal neural networks, it remains unsolved to achieve geometric completeness for crystalline materials, with demonstrations provided in Appendix~\ref{app:failures}
% Although PotNet considers infinite pairwise atomic interactions, it takes the summation values with bounded error as the pairwise edge features and cannot recover crystal geometric structures.
% A detailed comparison between the proposed ComFormer variants and previous crystal neural networks is shown in Table.~\ref{tab:compare}.
% that convert every edge in the multi-edge crystal graph as a single node and use bond angles between edges as line graph edge features.

\textbf{Geometric completeness for molecules}. To  pursue geometric completeness for molecules, various approaches have been developed. SphereNet~\citep{liu2021spherical} achieves geometric completeness for molecules in most cases by using the combination of Euclidean distances, bond angles, and torsional angles, with some failure cases. GemNet~\citep{klicpera2021gemnet} goes a step further by incorporating dihedral angles and achieving completeness for molecules with a complexity of $O(nk^3)$. ComENet~\citep{wang2022comenet} decreases the modeling complexity by selecting reference nodes using Euclidean distances. 
However, these methods fall short to capture periodic patterns~\citep{yan2022periodic} of crystals and are not geometrically complete for crystalline materials. As another branch of work, Graphormer~\citep{graphormer} uses fully-connected molecule graphs and achieves completeness for molecules, but breaks periodic invariance as shown in Appendix~\ref{app:limi_molecule}.

\textbf{Complete representations for crystals}. Constructing complete crystal representations in other forms is not a new topic. There are several recent efforts, including  AMD~\citep{widdowson2022average} and PDD~\citep{widdowson2022resolving} of matrix forms that are complete. However, AMD and PDD cannot distinguish chiral crystals with different properties. Additionally, using these matrix form representations as input to predict crystal properties without breaking the completeness is challenging in practice. First, AMD and PDD representations are designed for stable crystal structures and do not consider atom types. The completeness of AMD and PDD is based on the assumption that no two crystals can have the same structure but differ by a single atom type, which is only feasible for stable crystal structures.
Second, to achieve completeness, a sufficiently large number of neighbors $k$ needs to be predetermined for any test crystal (must be larger than the number of atoms in the cell for any test crystal), which is not feasible and very expensive as mentioned in PDD-ML~\citep{PDD_ML}, with more details provided in Appendix~\ref{app:limi_cry_matrix}. It is worth noting that our proposed graph representations can apply to dynamic crystal systems and are robust to various cell sizes.

% To summarize, existing graph neural networks for crystal and molecular representation learning cannot capture the whole geometric information of crystal structures and may map different crystals to the same graphs, with demonstrations provided in Appendix~\ref{app:failures}. Motivated by this, we propose the SE(3) invariant and SO(3) equivariant crystal graphs and ComFormer variants to enhance the expressive power and tackle challenges of geometric completeness. We describe the proposed crystal graphs in Sec.~\ref{sec:graph}, and ComFormer variants in Sec.~\ref{sec:model}.

\section{Geometrically complete crystal graph representations} \label{sec:graph}

In this section, we first demonstrate crystal passive symmetries that will not change crystal structures, including unit cell SE(3) invariance, unit cell SO(3) equivariance, and periodic invariance. We discuss consequences when these symmetries are not encoded. We then propose SE(3) invariant and SO(3) equivariant crystal graph representations that achieve geometric completeness for crystalline materials. We further provide proofs that the proposed crystal graphs satisfy crystal passive symmetries in Appendix~\ref{app:1}, and are geometrically complete in Sec.~\ref{sec:proof}, with verification provided in Appendix~\ref{app:complete_verification}.

\begin{figure}
    \centering
    \includegraphics[width=0.8\linewidth]{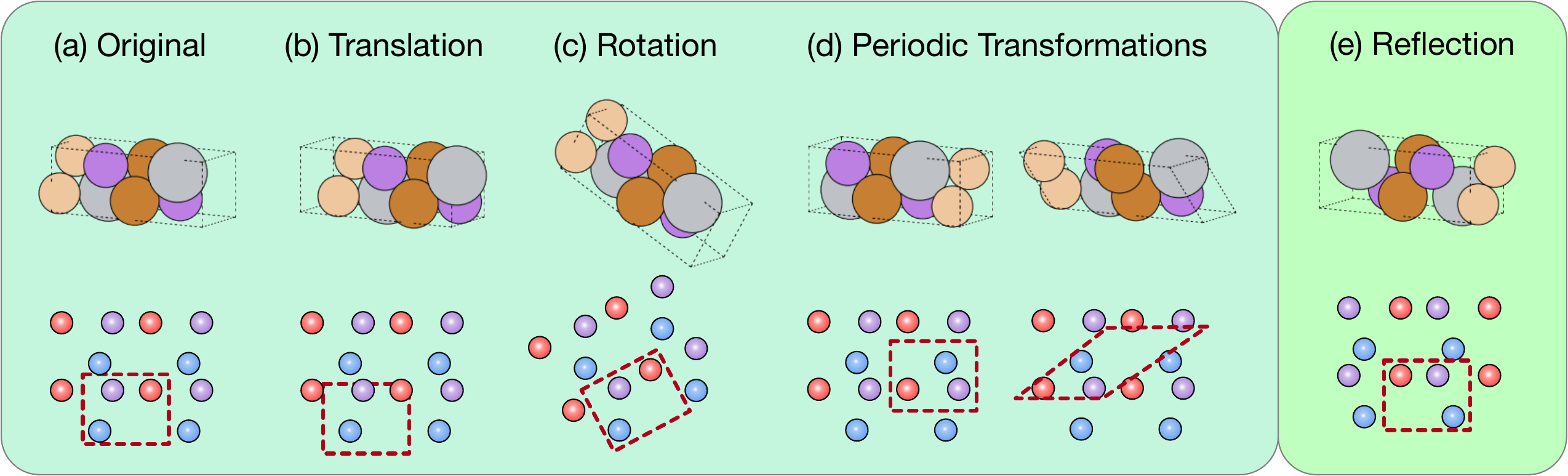}
    \caption{
    Illustrations of crystal passive symmetries. We show examples of a real crystal structure in 3D and corresponding simpler demonstrations in 2D. For the 2D demonstrations, we use circles with different colors to represent different kinds of atoms, and use red dotted lines to represent periodic unit cells. We show two cases of periodic transformations, including shifting the periodic unit cells and changing periodic unit cells without changing the volume. Translation, rotation, and periodic transformations will not change the crystal structure and are passive symmetries, while reflection transformation will map the crystal structure to its chiral image if the reflection symmetry is absent.}
    \label{fig:passive_trans}
    \vspace{-5mm}
\end{figure}

\subsection{Crystal passive symmetries} 
\label{sec:passive_sym}

% Passive symmetries for molecules and proteins are widely studied, including SE(3) invariance which means the property remain the same when the structure is rotated or translated in 3D space, and SO(3) equivariance which means the predictions will rotate accordingly when the whole structure is rotated. Passive symmetries including rotation and translation transformations can generate infinite number of structure descriptions for a given molecule or protein structure. Recent studies for molecule and protein representation learning~\cite{schnet, Dimenet,liu2021spherical} have shown the importance of achieving invariant or equivariant representations under passive symmetries to achieve superior prediction and generation abilities. 

Crystal structures are different from molecules and proteins and exhibit unique passive symmetries~\citep{villar2023passive}, including unit cell SE(3) invariance, unit cell SO(3) equivariance, and periodic invariance. Specifically, crystal structures remain unchanged when the position matrix $\mathbf{P}$ of the unit cell structure is translated alone or rotated together with $\mathbf{L}$, as shown in Fig.~\ref{fig:passive_trans}(b) and (c). Additionally, different minimum repeatable structures can be used to represent the same crystal as the periodic transformations shown in Fig.~\ref{fig:passive_trans}(d), and these different crystal structure representations $(\mathbf{A}$, $\mathbf{P}$, $\mathbf{L})$ introduce a crystal passive symmetry called periodic invariance. However, reflection transformation is not a crystal passive symmetry as it maps a crystal structure to its chiral image. We then provide formal definitions of these crystal passive symmetries as follows.
\begin{definition}[Unit Cell SE(3) Invariance]
A function $f:(\mathbf{A},\mathbf{P}, \mathbf{L}) \to \mathcal{X}$ is unit cell SE(3) invariant if, for any rotation transformation $\bold{R} \in \mathbb{R}^{3 \times 3}, |\bold{R}|= 1$ and translation transformation $b \in \mathbb{R}^{3}$ , we have $f(\mathbf{A}, \mathbf{P}, \mathbf{L}) = f(\mathbf{A}, \bold{R}\mathbf{P}+b, \bold{R}\mathbf{L})$. 
\end{definition}

A similar definition of unit cell E(3) invariance has been given in \cite{yan2022periodic}. The major difference between unit cell E(3) and SE(3) invariance is that unit cell SE(3) invariant functions further distinguish crystal structures with different chiralities. 

\begin{definition}[Unit Cell SO(3) Equivariance]
A function $f:(\mathbf{A},\mathbf{P}, \mathbf{L}) \to \mathcal{X}' \in R^{3}$ is unit cell SO(3) equivariant if, for any rotation transformation $\bold{R} \in \mathbb{R}^{3 \times 3}, |\bold{R}|= 1$ and translation transformation $b \in \mathbb{R}^{3}$ , we have $\bold{R}f(\mathbf{A}, \mathbf{P}, \mathbf{L}) = f(\mathbf{A}, \bold{R}\mathbf{P}+b, \bold{R}\mathbf{L})$.
\end{definition}
% \textcolor{blue}{need to be clear on $b$, $f$ and $\bold{R}f$ here (vector, etc.)... and I assume that this is your definition for this paper, seem to be translation-invariant due to periodicity of crystal?    Keqiang: yes, edited}

For a given infinite crystal structure $(\hat{\mathbf{P}}, \hat{\mathbf{A}})$ where $\hat{\mathbf{P}} = \{\hat{\boldsymbol{p}_i} | \hat{\boldsymbol{p}_i} = \boldsymbol{p}_i + k_1\boldsymbol{\ell}_1 + k_2\boldsymbol{\ell}_2 + k_3\boldsymbol{\ell}_3,~ k_1, k_2, k_3 \in \mathbb{Z}, i \in \mathbb{Z}, 1 \le i \le n \} $
and
$\hat{\mathbf{A}} = \{\hat{\boldsymbol{a}_i} | \hat{\boldsymbol{a}_i} = \boldsymbol{a}_i, i \in \mathbb{Z}, 1 \le i \le n \}$, periodic invariance describes that the representation of crystals should be invariant to the choice of minimum unit cell representations. The definition is detailed below. 
\begin{definition}[Periodic Invariance]
A function $f:(\mathbf{A},\mathbf{P}, \mathbf{L}) \to \mathcal{X}$ is periodic invariant if, for any possible minimum unit cell representations $\mathbf{M}' = (\mathbf{A}', \mathbf{P}', \mathbf{L}')$ representing a given infinite crystal structure $(\hat{\mathbf{P}}, \hat{\mathbf{A}})$, we have $f(\mathbf{A}, \mathbf{P}, \mathbf{L}) = f(\mathbf{A}', \mathbf{P}', \mathbf{L}')$.
\end{definition}
Two periodic transformations that can generate different minimum unit cell representations for the same crystal structure are shown in Fig.~\ref{fig:passive_trans}(d), including shifting periodic boundaries shown in the left and changing periodic patterns with the same unit cell volume shown in the right. 

\subsection{SE(3) invariant crystal graph representations}
\label{Sec:SE3}
% \begin{figure}
%     \centering
%     \includegraphics[width=0.30\linewidth]{}
%     \caption{
%     Illustration of our proposed complete crystal representations. For a given node $i$ and its neighbor $j$ in the crystal graph, we include edge length $||\mathbf{e_{ji}}||_2$, and three angles $\theta_{ji,ii_1}$, $\theta_{ji,ii_2}$, and $\theta_{ji,ii_3}$ between $\mathbf{e_{ji}}$ and three lattice vectors of node $i$, including $\mathbf{e_{ii_1}}$, $\mathbf{e_{ii_2}}$, and $\mathbf{e_{ii_3}}$. The lattice vectors $\mathbf{e_{ii_1}}$, $\mathbf{e_{ii_2}}$, and $\mathbf{e_{ii_3}}$ of node $i$ indicate how this node repeats itself in 3D space. All nodes in the same crystal structure share the same repeating patterns and the same lattice vectors.}
%     \label{fig:frame_rep}
% \end{figure}

% \textcolor{blue}{I would probably say this differently: To achieve geometric completeness for crystal graph representation, the critical modeling strategy here is to introduce three periodic vectors... Keqiang: Thank you, edited.}

To achieve geometric completeness for crystalline materials, we introduce three periodic vectors, $\mathbf{e_{ii_1}}$, $\mathbf{e_{ii_2}}$, and $\mathbf{e_{ii_3}}$, to serve as the lattice representation for node $i$. The lattice representations for different nodes are perfectly aligned because all atoms in a crystal share the same periodic patterns. 

\textbf{Invariant crystal graph construction}. Every node in the proposed SE(3) invariant crystal graph represents atom $i$ and all its infinite duplicates in 3D space, with positions $\{\hat{\boldsymbol{p}_i} | \hat{\boldsymbol{p}_i} = \boldsymbol{p}_i + k_1\boldsymbol{\ell}_1 + k_2\boldsymbol{\ell}_2 + k_3\boldsymbol{\ell}_3,~ k_1, k_2, k_3 \in \mathbb{Z}\}$ and node feature $\boldsymbol{a}_i$. An edge will be built from node $j$ to node $i$ when the Euclidean distance $||\mathbf{e_{j'i}}||_2$ between a duplicate of $j$ and $i$ satisfies
$||\mathbf{e_{j'i}}||_2=||\boldsymbol{p}_j + k_1^{'}\boldsymbol{\ell}_1 + k_2^{'}\boldsymbol{\ell}_2 + k_3^{'}\boldsymbol{\ell}_3-\boldsymbol{p}_i||_2 \le r$, where $r \in \mathbb{R}$ is the cutoff radius determined by the $k$-th nearest neighbor. Each edge has a corresponding feature vector $\{||\mathbf{e_{j'i}}||_2, \theta_{j'i,ii_1}, \theta_{j'i,ii_2}, \theta_{j'i,ii_3}\}$, with $\theta_{j'i,ii_1}$, $\theta_{j'i,ii_2}$, and $\theta_{j'i,ii_3}$ denoting the angles between $\mathbf{e_{j'i}}$ and lattice representation $\{\mathbf{e_{ii_1}}, \mathbf{e_{ii_2}}, \mathbf{e_{ii_3}}\}$, respectively. 
Duplicates $i_1$, $i_2$, and $i_3$ are included as neighbors of $i$ to encode periodic patterns. 
The construction of the lattice representation $\{\mathbf{e_{ii_1}}, \mathbf{e_{ii_2}}, \mathbf{e_{ii_3}}\}$ for node $i$ can be found below. 
% The constructed invariant crystal graphs achieve SE(3) invariance and geometric completeness for infinite crystal structures, with detailed proofs provided in Appendix \ref{app:1.1} and \ref{app:2.1}, respectively. 
% Complexity of the constructed graph is $O(nk)$.

\textbf{Importance of periodic invariance for lattice representation}. A straightforward way to obtain lattice representation is $\{\mathbf{e_{ii_1}} , \mathbf{e_{ii_2}}, \mathbf{e_{ii_3}}\} = \{\boldsymbol{\ell}_1, \boldsymbol{\ell}_2, \boldsymbol{\ell}_3\}$, where $\boldsymbol{\ell}_1, \boldsymbol{\ell}_2, \boldsymbol{\ell}_3$ are the periodic lattice vectors provided in input lattice matrix $\textbf{L}$. However, as shown in Sec.~\ref{sec:passive_sym} and Fig.~\ref{fig:passive_trans}, the same crystal structure can have different lattice matrices $\textbf{L}$ when applying periodic transformations. For instance, 
$\textbf{L}=[\boldsymbol{\ell}_1, \boldsymbol{\ell}_2, \boldsymbol{\ell}_3]^T$ and $\textbf{L}'=[\boldsymbol{\ell}_1+\boldsymbol{\ell}_2, \boldsymbol{\ell}_2, \boldsymbol{\ell}_3]^T$ describe the same periodic patterns and have the same volume of the unit cell. Therefore, lattice representation needs to be invariant to periodic transformations. 

\textbf{Lattice representation construction}. The lattice representation is constructed as follows to satisfy periodic invariance. First, we select the periodic duplicate $i_1$ of node $i$ that has the smallest edge length $||\mathbf{e_{ii_1}}||_2 \le ||\mathbf{e_{ii'}}||_2$ for any other duplicate $i'$ where $\{\boldsymbol{p}_i' = \boldsymbol{p}_i + k_1\boldsymbol{\ell}_1 + k_2\boldsymbol{\ell}_2 + k_3\boldsymbol{\ell}_3,~ k_1, k_2, k_3 \in \mathbb{Z}\}$. We then select the periodic duplicate $i_2$ of node $i$ that has the smallest edge length $||\mathbf{e_{ii_2}}||_2 \le ||\mathbf{e_{ii'}}||_2$ for any other duplicate $i'$ except $i_1$ and verify that $\mathbf{e_{ii_1}}$ and $\mathbf{e_{ii_2}}$ are not in the same line.  If they are in the same line in 3D space, we select the duplicate with the next smallest distance and repeat until $\mathbf{e_{ii_2}}$ is found. We then adjust the vector $\mathbf{e_{ii_2}}$ to $-\mathbf{e_{ii_2}}$ if the relative angle between $\mathbf{e_{ii_2}}$ and $\mathbf{e_{ii_1}}$ is larger than 90\textdegree. Third, we select the duplicate with the next smallest distance $||\mathbf{e_{ii_3}}||_2$, and verify that $\mathbf{e_{ii_3}}$, $\mathbf{e_{ii_1}}$ and $\mathbf{e_{ii_2}}$ are not in the same plane in 3D space, and repeat until $\mathbf{e_{ii_3}}$ is found, then adjust the vector $\mathbf{e_{ii_3}}$ to $-\mathbf{e_{ii_3}}$ if the relative angle between $\mathbf{e_{ii_3}}$ and $\mathbf{e_{ii_1}}$ is larger than 90\textdegree. In the end, we will verify that $\{\mathbf{e_{ii_1}}, \mathbf{e_{ii_2}}, \mathbf{e_{ii_3}}\}$ forms a right-hand system, if not, convert them by $\{-\mathbf{e_{ii_1}}, -\mathbf{e_{ii_2}}, -\mathbf{e_{ii_3}}\}$ to form a right-hand system.
The proposed lattice representation $\{\mathbf{e_{ii_1}}, \mathbf{e_{ii_2}}, \mathbf{e_{ii_3}}\}$ satisfies the unit cell SO(3) equivariance and periodic invariance, with detailed proofs provided in Appendix \ref{app:1.1}. We also provide proofs in Appendix \ref{app:1.1} that it distinguishes different chiral crystal structures.

\subsection{SO(3) equivariant crystal graph representations}
\label{Sec:SO3}

In addition to SE(3) invariant crystal graphs, we propose SO(3) equivariant crystal graphs that achieve geometric completeness by using vector representations. 

\textbf{Equivariant crystal graph construction}. The only difference between the proposed equivariant crystal graphs and invariant crystal graphs is the edge feature. Every edge in the equivariant crystal graph has the invariant edge feature $||\mathbf{e_{j'i}}||_2$ and the equivariant edge feature $\mathbf{e_{j'i}} \in \mathbb{R}^3$.
% Every node in the proposed SO(3) equivariant crystal graph represents atom $i$ and all its infinite duplicates in 3D space, with positions $\{\hat{\boldsymbol{p}_i} | \hat{\boldsymbol{p}_i} = \boldsymbol{p}_i + k_1\boldsymbol{\ell}_1 + k_2\boldsymbol{\ell}_2 + k_3\boldsymbol{\ell}_3,~ k_1, k_2, k_3 \in \mathbb{Z}\}$ and node feature $\boldsymbol{a}_i$. An edge will be built from node $j$ to node $i$ when the Euclidean distance $||\mathbf{e_{j'i}}||_2$ between a duplicate of $j$ and $i$ satisfies$||\mathbf{e_{j'i}}||_2=||\boldsymbol{p}_j + k_1^{'}\boldsymbol{\ell}_1 + k_2^{'}\boldsymbol{\ell}_2 + k_3^{'}\boldsymbol{\ell}_3-\boldsymbol{p}_i||_2 \le r$, where $r \in \mathbb{R}$ is the cutoff radius, with edge features $\mathbf{e_{j'i}}$. Duplicates $i_1$, $i_2$, and $i_3$ are included as neighbors of $i$ to encode periodic patterns. The construction of lattice representation $\{\mathbf{e_{ii_1}}, \mathbf{e_{ii_2}}, \mathbf{e_{ii_3}}\}$ for node $i$ are the same as the invariant crystal graphs as detailed in the previous section. 

% We provide proofs that the proposed equivariant crystal graph satisfies periodic invariance and unit cell SO(3) equivariance, and achieves geometric completeness for crystals in Appendix A.2.2. 
% However, SO(3) equivariant crystal graphs pose unique challenges to consider complete geometric information during message passing. If only the norm of vector representations were used during message passing, it will be the same as previous methods using bond lengths only, such as SchNet~\cite{schnet} and Matformer~\cite{yan2022periodic}.
With 3D vectors as edge features, the unit cell SO(3) equivariant property can be achieved as proven in Appendix~\ref{app:SO3} . However, SO(3) equivariant crystal graphs pose challenges to consider expressive geometric information during message passing. If only the norm of 3D vectors were used during message passing, it will be the same as previous methods using Eucldiean distances only, such as SchNet~\citep{schnet} and Matformer~\citep{yan2022periodic}.

\subsection{Geometric completeness of proposed crystal graphs} \label{sec:proof}

\begin{proposition}\label{prop:1} The SE(3) invariant and SO(3) equivariant crystal graphs are geometrically complete.
\end{proposition}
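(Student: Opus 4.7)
The plan is to give, for either graph variant, an explicit reconstruction that takes the graph $\gG$ as input and returns the infinite crystal structure $(\hat{\mathbf{P}},\hat{\mathbf{A}})$ up to a global rigid motion; with such a reconstruction in hand, the implication $\gG_1=\gG_2 \Rightarrow \mathbf{M}_1 \cong \mathbf{M}_2$ demanded by Definition~\ref{def:1} follows. Since the SO(3) equivariant variant carries the same information as the SE(3) invariant one in a more direct form (3D vectors rather than angles), I would write the invariant case out in detail and remark that the equivariant case is strictly easier.

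The first step is to recover the lattice basis. By construction the three duplicates $i_1,i_2,i_3$ of any atom $i$ are neighbors of $i$, so the self-edges $(i_1,i),(i_2,i),(i_3,i)$ are present in $\gG$. Their edge features supply $\|\mathbf{e_{ii_k}}\|$ for $k=1,2,3$ and, via $\theta_{i_m i,\,i i_k}$ (which is the angle between $\mathbf{e_{i_m i}}=-\mathbf{e_{ii_m}}$ and $\mathbf{e_{ii_k}}$), every mutual angle among the three lattice-representation vectors. The full Gram matrix of $\{\mathbf{e_{ii_1}},\mathbf{e_{ii_2}},\mathbf{e_{ii_3}}\}$ is therefore determined, which pins the three vectors down to an orthogonal transformation; the right-handed convention from Section~\ref{Sec:SE3} removes the reflection and leaves exactly the SO(3) ambiguity that SE(3) invariance already permits. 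Fixing any realization places the lattice basis as concrete vectors in $\mathbb{R}^3$, and by Appendix~\ref{app:1.1} this basis spans the translation lattice.

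The second step is to recover each neighbor's relative position. For every edge $(j',i)\in\gG$ the feature tuple supplies the three dot products
\[
\mathbf{e_{j'i}} \cdot \mathbf{e_{ii_k}} \;=\; \|\mathbf{e_{j'i}}\|\,\|\mathbf{e_{ii_k}}\|\,\cos\theta_{j'i,\,ii_k}, \qquad k=1,2,3,
\]
which is a $3\times 3$ linear system for the unknown $\mathbf{e_{j'i}}\in\mathbb{R}^3$ whose coefficient matrix has the non-coplanar lattice vectors as rows and is therefore invertible; the norm of the solution can be double-checked against $\|\mathbf{e_{j'i}}\|$. For the SO(3) equivariant graph this step is essentially trivial because the edge features already provide $\mathbf{e_{j'i}}$ and $\mathbf{e_{ii_k}}$ as vectors. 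In either case, I would then pin a reference atom at the origin and propagate the recovered edge vectors through $\gG$ to place every atom of the representative unit cell, and finally tile by integer combinations of the recovered lattice basis to generate the full infinite structure.

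The step I expect to be the main obstacle is the propagation phase: one has to argue that every atom of the unit cell is reachable from the reference atom through edges of $\gG$ and that path-independence of the recovered positions holds. Path-independence is immediate for the equivariant case since edge vectors live in $\mathbb{R}^3$ and add up directly, and it follows in the invariant case from the uniqueness of the linear-system solution above. Reachability ultimately relies on the cutoff $r$ being large enough that the multi-edge graph on the $n$ unit-cell atoms is connected; this needs to be justified either as a property of the $k$-th-nearest-neighbor construction (for sufficiently large $k$) or by observing that all atoms share the same reconstructed lattice basis, so the absolute offset of any disconnected component can still be determined from its own self-edges relative to another component. A secondary subtlety is degeneracy when several duplicates of $i$ tie in length, but any deterministic tie-breaking rule is sufficient because the Gram matrix, and hence the whole reconstruction, is invariant under such ties.
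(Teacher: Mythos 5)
Your proposal is correct and is essentially the paper's own argument: the paper organizes it as an induction whose base case recovers the lattice basis from the self-edge lengths, mutual angles, and the right-hand convention (your Gram-matrix-plus-chirality step), and whose inductive step pins down each new atom's relative position from the three dot products against the non-coplanar lattice basis (your invertible $3\times 3$ system), with your reachability worry handled by the paper's standing assumption that the crystal graph is strongly connected. The explicit-reconstruction framing you chose is exactly the procedure the paper itself uses in its appendix to verify completeness empirically, so the two write-ups differ only in presentation, not in substance.
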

\begin{proof} We prove by mathematical induction. We assume that the number of atoms (nodes) in the crystal unit cell is $n$. Note that the crystal graph we consider is strongly connected, which means that there exists a path between any two nodes in the graph. All  crystalline materials in nature can be constructed as strongly connected graphs.

Base case: The infinite crystal structure represented by SE(3) invariant or SO(3) equivariant crystal graphs is unique when $n=1$. 

Inductive hypothesis: The infinite crystal structure is unique holds for $n$ up to $m \ge 1$.

Inductive step: Let $n = m + 1$. Without loss of generality, among the existing $m$ nodes, we safely assume $\mathcal{N}_j$ is the set of nodes that form the local region of node $j$. Then $j$ is the index of the $(m+1)$-th node newly connected to these nodes. To prove the infinite crystal structure is still unique, we only need to prove that the relative position of node $j$ is uniquely determined given the SE(3) invariant crystal graph or the SO(3) equivariant crystal graph. The proofs of Base and Inductive cases are provided in Appendix~\ref{app:proof}. With that, the proposed SE(3) invariant and SO(3) equivariant crystal graphs can determine a unique infinite crystal structure. Hence, the same crystal graphs proposed will only represent the identical infinite crystal structure. Then, based on Definition~\ref{def:1}, we complete the proof of Proposition~\ref{prop:1}.
\end{proof}

\begin{figure}
    \centering
    \includegraphics[width=0.8\linewidth]{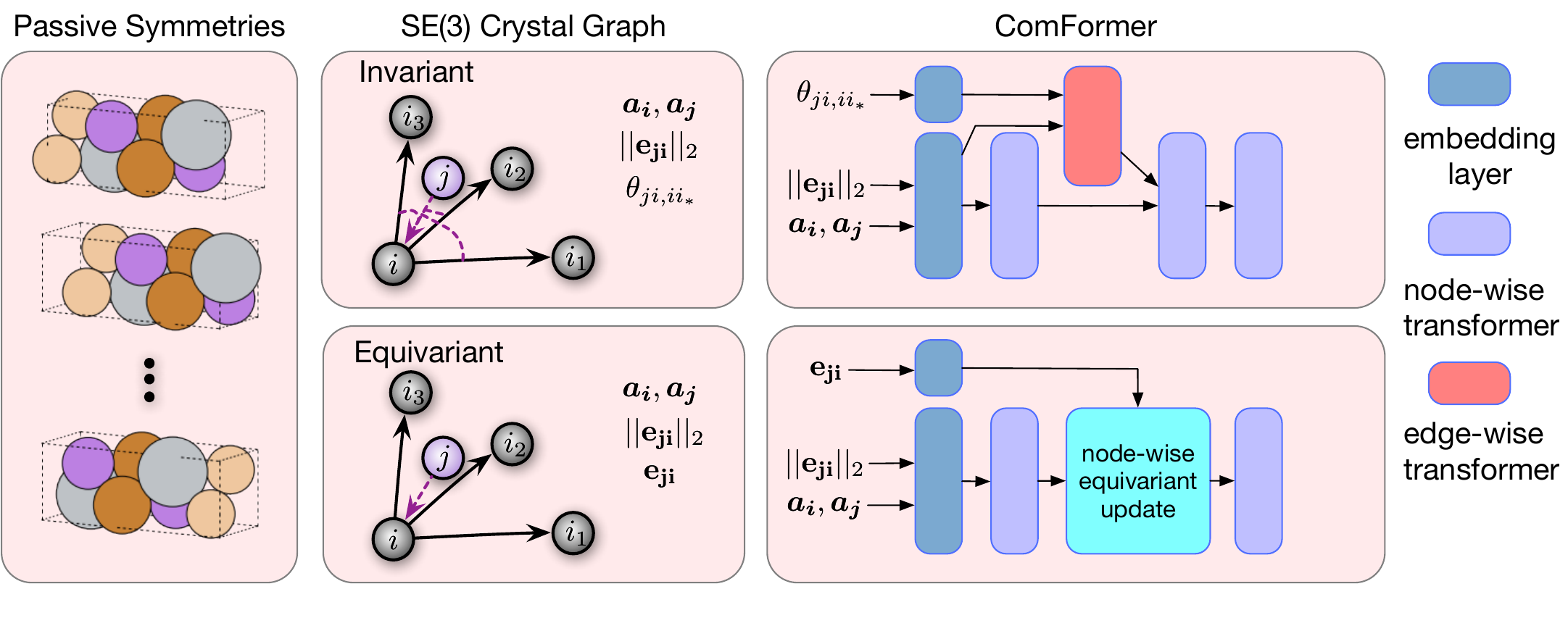}
    \vspace{-2mm}
    \caption{
    Illustration of the proposed ComFormer pipeline. In the left figure, we show different unit cell structures for the same crystal due to passive crystal symmetries, and all of them will map to the same invariant or equivariant crystal graph shown in the middle. In the middle, we demonstrate the information included in our proposed SE(3) invariant and equivariant crystal graphs. Specifically, we include node feature $\boldsymbol{a}_i$ for every node $i$, and for every neighbor $j$ of node $i$, we include edge length $||\mathbf{e_{ji}}||_2$, and three angles $\theta_{ji,ii_*} = \{ \theta_{ji,ii_1}, \theta_{ji,ii_2}, \theta_{ji,ii_3} \}$ in invariant crystal graphs, and edge vector $\mathbf{e_{ji}}$ in equivariant crystal graphs. The proposed iComFormer and eComFormer are shown on the right, with building blocks marked by different colors. 
    }
    \label{fig:pipeline}
    \vspace{-5mm}
\end{figure}

\section{Efficient and expressive crystal transformers} \label{sec:model}

We then propose iComFormer and eComFormer for crystal representation learning, based on the proposed SE(3) invariant and SO(3) equivariant crystal graphs in an efficient manner. 

\subsection{SE(3) invariant message passing} \label{sec:invariant_msg}
In this section, we first present iComFormer as shown in the upper right panel of  Fig.~\ref{fig:pipeline}. iComFormer utilizes SE(3) invariant crystal graphs with computational complexity of $O(nk)$, where $n$ is the number of nodes, and $k$ is the average number of neighbors for every node.

\textbf{Node and edge feature embedding layer}. Node feature $\boldsymbol{a_i}$ for node $i$ is mapped to $\boldsymbol{f}_i^0$ using CGCNN~\citep{cgcnn} embedding features as the initial node feature before transformer layers. Edge feature $||\mathbf{e_{ji}}||_2$ for node pair $j$ and $i$ is mapped to $c/||\mathbf{e_{ji}}||_2$ and embedded by RBF kernels to initial edge feature $\boldsymbol{f}_{ji}^e$, where $c$ is a chosen constant to mimic the pairwise potential calculation following \cite{potnet}. Three angles $\theta_{ji,ii_1}$, $\theta_{ji,ii_2}$, and $\theta_{ji,ii_3}$ are mapped to $\boldsymbol{f}_{ji_1}^\theta$, $\boldsymbol{f}_{ji_2}^\theta$, and $\boldsymbol{f}_{ji_3}^\theta$ by cosine function and RBF kernels. More details are provided in Appendix~\ref{app:4}.

\textbf{Node-wise transformer layer}. Node-wise transformer layer first passes messages from a neighboring node $j$ to the center node $i$ using node features $\boldsymbol{f}_j^l$, $\boldsymbol{f}_i^l$ and edge feature $\boldsymbol{f}_{ji}^e$ where $l$ indicates the layer number, and then aggregates all neighboring messages to update $\boldsymbol{f}_i^l$. The message from node $j$ to $i$ is formed by the corresponding  query $\boldsymbol{q}_{ji}$, key $\boldsymbol{k}_{ji}$, and value features $\boldsymbol{v}_{ji}$ as follows,
\begin{equation}
\begin{aligned}
    \boldsymbol{q}_{ji} = \text{LN}_Q(\boldsymbol{f}_i^l), \boldsymbol{k}_{ji} =& (\text{LN}_K(\boldsymbol{f}_i^l) | \text{LN}_K(\boldsymbol{f}_j^l) | \text{LN}_E(\boldsymbol{f}_{ji}^e)), \boldsymbol{v}_{ji} = (\text{LN}_V(\boldsymbol{f}_i^l) | \text{LN}_V(\boldsymbol{f}_j^l) | \text{LN}_E(\boldsymbol{f}_{ji}^e)) \\ 
    \boldsymbol{\alpha}_{ji} = & \frac{ \boldsymbol{q}_{ji} \circ \boldsymbol{\sigma}_{K}(\boldsymbol{k}_{ji})}{\sqrt{d_{\boldsymbol{q}_{ji}}}}, ~
    \boldsymbol{msg}_{ji} = \text{sigmoid}(\text{BN}(\boldsymbol{\alpha}_{ji})) \circ 
    \boldsymbol{\sigma}_{V}(\boldsymbol{v}_{ji}),
\end{aligned}
\end{equation}
where $\text{LN}_Q, \text{LN}_K, \text{LN}_V, \text{LN}_E$ are the linear transformations for query, key, value, and edge features. $\boldsymbol{\sigma}_{K}, \boldsymbol{\sigma}_{V}$ are the nonlinear transformations for key and value, including two linear layers and an activation layer in between. $\circ$ and $|$ denote the Hadamard product and concatenation, respectively. $\text{BN}$ denotes the batch normalization layer, and $d_{\boldsymbol{q}_{ji}}$ is the dimension of $\boldsymbol{q}_{ji}$. In our design, the query in the transformer layer only contains the center node feature, while the key and value contain the center node, neighboring node, and edge features.
Then, node feature $\boldsymbol{f}_i^l$ is updated using aggregated messages from neighbors to center node $i$ as follows,
\begin{equation}
    \boldsymbol{msg}_{i} = \sum_{j \in \mathcal{N}_i} \boldsymbol{msg}_{ji}, ~ \boldsymbol{f}_i^{l+1} = \boldsymbol{\sigma}_{msg}(\boldsymbol{f}_i^{l} + \text{BN}(\boldsymbol{msg}_{i})),
\end{equation}
where $\boldsymbol{\sigma}_{msg}$ denotes the softplus activation function. %, and BN is the batch normalization layer.
Because we calculate $k$ messages for every node $i$ where $k$ denotes the average degree, the complexity of node-wise transformer layer is $O(nk)$.

\textbf{Edge-wise transformer layer}. Edge-wise transformer layer updates edge feature $\boldsymbol{f}_{ji}^e$ using angle features $\boldsymbol{f}_{ji_1}^\theta$, $\boldsymbol{f}_{ji_2}^\theta$, and $\boldsymbol{f}_{ji_3}^\theta$, and edge features of lattice vectors  $\boldsymbol{f}_{ii_1}^e$, $\boldsymbol{f}_{ii_2}^e$, and $\boldsymbol{f}_{ii_3}^e$. Detailed updating computation of edge-wise transformer layer is as follows,
\begin{equation}
\begin{aligned}
    \boldsymbol{q}_{ji}^e = & \text{LN}_Q^e(\boldsymbol{f}_{ji}^e), 
    \boldsymbol{k}_{ji_m}^e = (\text{LN}_K^e(\boldsymbol{f}_{ji}^e)|\text{LN}_K^{\theta_m}(\boldsymbol{f}_{ii_m}^e)),
    \boldsymbol{v}_{ji_m}^e = (\text{LN}_V^e(\boldsymbol{f}_{ji}^e)|
    \text{LN}_V^{\theta_m}(\boldsymbol{f}_{ii_m}^e)), \\
    \boldsymbol{\alpha}_{ji_m}^{e} &=  \frac{ \boldsymbol{q}_{ji}^e \circ \boldsymbol{\sigma}_{K}^e(\boldsymbol{k}_{ji_m}^e|\text{LN}_\theta(\boldsymbol{f}_{ji_m}^\theta))}{\sqrt{d_{\boldsymbol{q}_{ji}^e}}}, 
    \boldsymbol{msg}_{ji_m}^e = \text{sigmoid}(\text{BN}(\boldsymbol{\alpha}_{ji}^{e_m})) \circ \boldsymbol{\sigma}_{V}^e(\boldsymbol{v}_{ji_m}^e|\text{LN}_\theta(\boldsymbol{f}_{ji_m}^\theta)),
\end{aligned}
\end{equation}
where $\text{LN}_Q^e, \text{LN}_K^e, \text{LN}_V^e$ represent linear transformations for edge feature $\boldsymbol{f}_{ji}^e$, $\text{LN}_K^{\theta_m}$, $\text{LN}_V^{\theta_m}, m \in \{1, 2, 3\}$ represent linear transformations for edge features of lattice vectors, and $\text{LN}_\theta$ is the linear transformation for angle features $\boldsymbol{f}_{ji_1}^\theta$, $\boldsymbol{f}_{ji_2}^\theta$, $\boldsymbol{f}_{ji_3}^\theta$. Then, edge feature $\boldsymbol{f}_{ji}^e$ is updated using aggregated messages from three lattice vectors as below,
\begin{equation}
    \boldsymbol{msg}_{ji}^e = \sum_{m \in \{0, 1, 2\}} \boldsymbol{msg}_{ji_m}^e, ~ \boldsymbol{f}_{ji}^{e*} = \boldsymbol{\sigma}_{msg}(\boldsymbol{f}_{ji}^e + \text{BN}(\boldsymbol{msg}_{ji}^e)),
\end{equation}
because every edge $ji$ has three neighbors, the complexity of edge-wise transformer layer is $O(3nk)=O(nk)$. Hence, the total complexity of iComFormer is $O(nk)$. The proof that iComFormer considers expressive geometric information in message passing is shown in Appendix \ref{app:3}.

\subsection{SO(3) equivariant message passing}
In this section, we describe SO(3) equivariant message passing for eComFormer, as shown in the lower right panel of Fig.~\ref{fig:pipeline}. For the design of eComFormer message passing, a core question to answer is how to capture expressive geometric information in SO(3) crystal graphs. Specifically, different from Euclidean distances, vector features $\mathbf{e_{ji}}$ do not support nonlinear mappings. To tackle this, we employ the recent advances in tensor product (TP) layers~\citep{e3nngeiger2022} and capture the expressive geometric information by properly stacking several TP layers together.
% Our eComFormer implicitly considers expressive geometric information by combining the node-wise equivariant updating layer with the node-wise transformer layer.

\textbf{Node and edge feature embedding layer}. Node feature $\boldsymbol{a_i}$ and edge feature $||\mathbf{e_{ji}}||_2$ for node pair $j$ and $i$ are embedded in the same way as iComFormer. Equivariant edge feature $\mathbf{e_{ji}}$ is embedded by corresponding spherical harmonics $\bold{Y}_0(\hat{\mathbf{e_{ji}}}) = c_0$, $\bold{Y}_1(\hat{\mathbf{e_{ji}}})=c_1 * \hat{\mathbf{e_{ji}}} \in \mathbb{R}^3$ and $\bold{Y}_2(\hat{\mathbf{e_{ji}}}) \in \mathbb{R}^5$ where $\hat{\mathbf{e_{ji}}} = \frac{\mathbf{e_{ji}}}{||\mathbf{e_{ji}}||_2}$ and $c_0, c_1$ are constants, with more details provided in Appendix \ref{app:4}.

\textbf{Node-wise equivariant updating layer}. To capture geometric information represented by vector features $\mathbf{e_{ji}}$ in SO(3) crystal graphs, we propose the node-wise equivariant updating layer. Specifically, it implicitly considers expressive geometric information by stacking two tensor product layers and taking edge features $\{\bold{Y}_0(\hat{\mathbf{e_{ji}}}), \bold{Y}_1(\hat{\mathbf{e_{ji}}}), \bold{Y}_2(\hat{\mathbf{e_{ji}}})\}$ and node features $\boldsymbol{f}_i^l$ as input. The first tensor product layer aggregates higher rotation order neighborhood information to the center node $i$ as follows,
\begin{equation}
% \begin{aligned}
    \boldsymbol{f}_{i,0}^l = \boldsymbol{f}_i^{l'} + \frac{1}{|\mathcal{N}_i|}\sum_{j \in \mathcal{N}_i} \textbf{TP}_0(\boldsymbol{f}_j^{l'}, \bold{Y}_0(\hat{\mathbf{e_{ji}}})), 
    \boldsymbol{f}_{i,\lambda}^l = \frac{1}{|\mathcal{N}_i|}\sum_{j \in \mathcal{N}_i} \textbf{TP}_\lambda(\boldsymbol{f}_j^{l'}, \bold{Y}_\lambda(\hat{\mathbf{e_{ji}}})), \lambda \in \{1, 2\},
    % \\ 
    % & \boldsymbol{f}_{i,2}^l = \frac{1}{|\mathcal{N}_i|}\sum_{j \in \mathcal{N}_i} \textbf{TP}_2(\boldsymbol{f}_i^{l'}, \bold{Y}_2(\hat{\mathbf{e_{ji}}})), 
% \end{aligned}
\end{equation}
where $\boldsymbol{f}_i^{l'} = \text{LN}(\boldsymbol{f}_i^l)$, $|\mathcal{N}_i|$ is the number of neighbors of node $i$, and $\textbf{TP}_0, \textbf{TP}_1, \textbf{TP}_2$ are tensor product layers that produce outputs with rotation order equal 0, 1, and 2, respectively. Then, the second tensor product layer produce SE(3) invariant node features as below,
\begin{equation}
\begin{aligned}
    \boldsymbol{f}_{i}^{l*} = \frac{1}{|\mathcal{N}_i|} (\sum_{j \in \mathcal{N}_i} \textbf{TP}_0(\boldsymbol{f}_{j,0}^{l}, \bold{Y}_0(\hat{\mathbf{e_{ji}}}))
     + &\sum_{j \in \mathcal{N}_i} \textbf{TP}_0(\boldsymbol{f}_{j,1}^{l}, \bold{Y}_1(\hat{\mathbf{e_{ji}}})) + \sum_{j \in \mathcal{N}_i} \textbf{TP}_0(\boldsymbol{f}_{j,2}^{l}, \bold{Y}_2(\hat{\mathbf{e_{ji}}}))), \\
     \boldsymbol{f}_{i,updated}^{l} = &\boldsymbol{\sigma}_{\text{equi}}(\text{BN}(\boldsymbol{f}_{i}^{l*})) + \text{LN}_{\text{equi}}(\boldsymbol{f}_{i}^{l}),
\end{aligned}
\end{equation}
where $\boldsymbol{\sigma}_{\text{equi}}$ is a nonlinear transformation consisting of two softplus layers and a linear layer in between. We then show that by the proper design of tensor product layers, the output $\boldsymbol{f}_{i,updated}^{l}$ of node-wise equivariant updating layer considers all two-hop bond angle information $\sum_{j \in \mathcal{N}_i} \sum_{m \in \mathcal{N}_j} \text{COS}(\theta_{mj,ji})$ in $\sum_{j \in \mathcal{N}_i} \textbf{TP}_0(\boldsymbol{f}_{j,1}^{l}, \bold{Y}_1(\hat{\mathbf{e_{ji}}}))$, with details provided in Appendix \ref{app:3}. 
% Combined with the node-wise transformer layer that considers Euclidean distances, eComFormer considers expressive geometric information.

\section{Experiments}

We assess the expressiveness of our iComFormer and eComFormer models by conducting evaluations on three widely-used crystal benchmarks: JARVIS~\citep{choudhary2020joint}, the Materials Project~\citep{megnet}, and MatBench~\citep{matbench}. Through extensive experiments, our proposed ComFormer exhibits excellent prediction accuracy across a range of crystal properties, underscoring its modeling capabilities for crystalline materials. 
% This success can be attributed to the effective integration of geometrically complete crystal graphs and the ComFormer architectures.

% We conduct experiments on JARVIS-2021.8.18, The Materials Project~(MP)-2018.6.1, and MatBench. 
For JARVIS and MP, we follow the experimental settings of Matformer~\citep{yan2022periodic} and PotNet~\citep{potnet}, 
% The crystal properties include formation energy, band gap, Ehull, total energy, bulk moduli, and shear moduli. 
and use mean absolute error (MAE) as the evaluation metric. 
Among these tasks, there are large scale tasks with 69239 crystals, medium scale tasks with 18171 crystals, and small scale tasks with 5450 crystals. To further evaluate the performances, we use e\_form with 132752 crystals and jdft2d with only 636 2D crystals in MatBench. The baseline methods include CGCNN~\citep{cgcnn}, MEGNET~\citep{megnet}, GATGNN~\citep{gatgnn}, ALIGNN~\citep{alignn}, Matformer~\citep{yan2022periodic}, PotNet~\citep{potnet}, M3GNet~\citep{m3gnet}, MODNet~\citep{modnet}, and coGN~\citep{coGN}. For tasks in these three datasets, the best results are shown in \textbf{bold} and the second best results are shown with \uline{underlines}. We use one TITAN A100 for computing. More details of ComFormer settings and benchmark details can be found in Appendix \ref{app:4} and \ref{app:5}, respectively. 

\subsection{Experimental results}

\begin{wraptable}{R}{8.5cm}
 \vspace{-0.5cm}
 \caption{Comparison on JARVIS in terms of test MAE.}
 \label{jarvis-table}
 \centering
 \resizebox{0.6\columnwidth}{!}{
 \begin{tabular}{lccccc}
    \toprule
    % \multicolumn{2}{c}{Part}                   \\
    % \cmidrule(r){1-2}
    & Form. Energy & Bandgap(OPT) & $E_{total}$ & Bandgap(MBJ) & $E_{hull}$ \\
    \cmidrule(r){2-6}
    Method & meV/atom  &  eV & meV/atom & eV & meV  \\
    \midrule
    CGCNN  & 63 &   0.20 & 78 & 0.41 & 170 \\
    SchNet  & 45 &   0.19 & 47 & 0.43 & 140 \\
    MEGNET  & 47 &   0.145 & 58 & 0.34 & 84 \\
    GATGNN  & 47 &   0.170 & 56 & 0.51 & 120 \\
    ALIGNN & 33.1 &  0.142 & 37 & 0.31 & 76  \\
    Matformer & 32.5 & 0.137 & 35 & 0.30 & 64  \\
    PotNet & 29.4 & 0.127 & \uline{32}  & \uline{0.27} & 55   \\
    \midrule
    eComFormer & \uline{28.4} & \uline{0.124} & \uline{32}  & 0.28 & \textbf{44}     \\
    iComFormer & \textbf{27.2} & \textbf{0.122} & \textbf{28.8}  & \textbf{0.26} & \uline{47}  \\
    \bottomrule
  \end{tabular}
 }
\vspace{-0.3cm}
\end{wraptable}
\textbf{JARVIS}. As shown in Table~\ref{jarvis-table}, compared with highly competitive baselines, ComFormer variants achieve the best performances on all tasks. Specifically, iComFormer achieves $8\%$ improvement beyond PotNet on formation energy, $10\%$ improvement on $E_{total}$, and $15\%$ improvement on $E_{hull}$, while eComFormer achieves $20\%$ improvement on $E_{hull}$ and the second best performances on other three tasks. 
% The robust performances of ComFormer variants
% demonstrate the effectiveness of our proposed invariant and equivariant crystal representations and ComFormer architectures. 

\begin{wraptable}{R}{8.0cm}
 \vspace{-0.5cm}
 \caption{Comparison 
  % between Matformer and other baselines 
  % in terms of test MAE 
  on The Materials Project.} % 2018.6.1 .}
  \label{mp-table}
 \centering
 \resizebox{0.55\columnwidth}{!}{
 \begin{tabular}{lcccc}
    \toprule
    & Form. Energy & Band Gap & Bulk Moduli & Shear Moduli \\
    \cmidrule(r){2-5}
    Method & meV/atom  &  eV &   log(GPa) & log(GPa)  \\
    \midrule
    CGCNN & 31 & 0.292  & 0.047 &0.077 \\
    SchNet & 33 & 0.345 & 0.066 & 0.099 \\
    MEGNET & 30 & 0.307 & 0.060 & 0.099 \\
    GATGNN & 33 & 0.280 & 0.045 & 0.075 \\
    ALIGNN & 22 & 0.218 & 0.051 & 0.078 \\
    Matformer & 21.0 & 0.211 & 0.043 & 0.073 \\
    PotNet & 18.8 & 0.204 & \uline{0.040} & \uline{0.0650} \\
    \midrule
    eComFormer & \textbf{18.16} & \uline{0.202} & 0.0417 & 0.0729 \\
    iComFormer & \uline{18.26} & \textbf{0.193} & \textbf{0.0380} & \textbf{0.0637} \\
    \bottomrule
  \end{tabular}
 }
\vspace{-0.3cm}
\end{wraptable}
\textbf{The Materials Project~(MP)}. Experimental results on MP are shown in Table~\ref{mp-table}. It can be seen that iComFormer achieves the best performances on three out of four tasks with significant margins beyond previous works, while eComFormer achieves the best on formation energy. Specifically, the excellent prediction accuracy of iComFormer on bulk moduli and shear moduli with only 4664 training samples shows the expressiveness and robustness of the invariant crystal graphs with limited training samples. 

\begin{wraptable}{R}{6.5cm}
 \vspace{-0.3cm}
 \caption{Comparison on MatBench.
 % The unit for these tasks is meV/atom.
 }
  \label{matbench-table}
 \centering
 \resizebox{0.45\columnwidth}{!}{
 \begin{tabular}{lcccc}
    \toprule
    & \multicolumn{2}{c}{e\_form-132752} & \multicolumn{2}{c}{jdft2d-636 }  \\
    \cmidrule(r){2-5}                  
    Methods & MAE & RMSE & MAE & RMSE \\
    \midrule
    % CGCNN & 33.7 $\pm$ 0.6 & 68.2 $\pm$ 5.0 & 49.2 $\pm$ 11.6 & 112.8 $\pm$ 48.2\\
    MODNet & 44.8 $\pm$ 3.9 & 88.8 $\pm$ 7.5 & \textbf{33.2 $\pm$ 7.3} & \uline{96.7 $\pm$ 40.4}\\
    ALIGNN & 21.5 $\pm$ 0.5 & 55.4 $\pm$ 5.5 & 43.4 $\pm$ 8.9 & 117.4 $\pm$ 42.9\\
    coGN & 17.0 $\pm$ 0.3 & 48.3 $\pm$ 5.9 & 37.2 $\pm$ 13.7 & 101.2 $\pm$ 55.0\\
    M3GNet & 19.5 $\pm$ 0.2 & - & 50.1 $\pm$ 11.9 & - \\
    \midrule
    eComFormer & \textbf{16.5 $\pm$ 0.3} & \uline{45.4 $\pm$ 4.7} & $37.8 \pm 9.0$ & $102.2 \pm 46.4$\\
    iComFormer & \textbf{16.5 $\pm$ 0.3} & \textbf{43.8 $\pm$ 3.7} & \uline{34.8 $\pm$ 9.9} & \textbf{96.1 $\pm$ 46.3}\\
    \bottomrule
  \end{tabular}
 }
\vspace{-0.3cm}
\end{wraptable}
\textbf{MatBench}. Experimental results for MatBench are shown in Table~\ref{matbench-table} in terms of MAE and RMSE with standard deviation of five benchmark runs. For the largest scale e\_form with more than 130K crystals, ComFormer variants achieve the state-of-the-art performances, shedding light on the elegant integration of angle information in the complete crystal representations beyond other methods using angle information like ALIGNN and M3GNet. For jdft2d task, iComFormer is competitive with 
MODNet, the state-of-the-art physical descriptor based method for materials, indicating the excellent
modeling power of iComFormer with only hundreds training samples. Overall, 
the robust performances of ComFormer variants on these three benchmarks for various crystal properties with different data scales 
demonstrate the effectiveness of the proposed invariant and equivariant crystal representations and ComFormer architectures.

% \begin{table}
% \tiny
%   \caption{Efficiency analysis on JARVIS formation energy task. We show the number of parameters and running time per epoch of each method.}
%   \label{jarvis-running}
%   \centering

%   \begin{tabular}{lccc}
%     \toprule
%     Method & Model complexity & Num. Parameter  & Running time   
%     \\
%     \midrule
%     Matformer &  $O(nk)$ & 11.0 MB & 64 s  \\
%     % PotNet &  $O(nk + n^2)$ & 6.7 MB & 42 s  \\
%     ALIGNN &  $O(nk^2)$ &   15.4 MB & 327 s \\
%     % M3GNet &  $O(nk^2)$ &  running \\
%     eComFormer &  $O(nk)$ & 52.8 MB & 96 s    \\
%     iComFormer &  $O(nk)$  & 15.7 MB & 62 s   \\
%     \bottomrule
%   \end{tabular}
%   \vspace{-5mm}
% \end{table}

\begin{wraptable}{R}{6.5cm}
 \vspace{-0.4cm}
 \caption{Efficiency analysis.}
  \label{jarvis-running}
 \centering
 \resizebox{0.46\columnwidth}{!}{
 \begin{tabular}{lccc}
    \toprule
    Method & Complexity & Num. Params.  & Time/epoch   
    \\
    \midrule
    Matformer &  $O(nk)$ & 2.9 M & 64 s  \\
    % PotNet &  $O(nk + n^2)$ & 6.7 MB & 42 s  \\
    ALIGNN &  $O(nk^2)$ &   4.0 M & 327 s \\
    % M3GNet &  $O(nk^2)$ &  running \\
    \midrule
    eComFormer &  $O(nk)$ & 12.4 M & 115 s   \\
    eComFormer-half &  $O(nk)$ & 5.6 M & 90 s   \\
    iComFormer(3) &  $O(nk)$  & 4.1 M & 69 s   \\
    iComFormer(4)&  $O(nk)$  & 5.0 M & 78 s   \\
    \bottomrule
  \end{tabular}
 }
\vspace{-0.3cm}
\end{wraptable}
\textbf{Efficiency}. We evaluate the model complexity and running time of ComFormer variants by comparing them with geometrically incomplete methods Matformer and ALIGNN on the JARVIS formation energy task. To make the comparison fair, we follow Matformer and ALIGNN and use $k=12$ when constructing crystal graphs. Further analysis when increasing $k$ is provided in Appendix~\ref{app:efficiency_k}. We use 3 and 4 to indicate the number of node layers of iComFormer, which covers all experimental settings of iComFormer. As shown in Table~\ref{jarvis-running}, our proposed ComFormer variants are significantly more efficient than ALIGNN and are comparable with Matformer which only uses Euclidean distances.
% , Our proposed iComFormer not only achieves geometric completeness for crystals, but also is efficient and can scale to large-scale crystal datasets and large crystal structures, with running time per epoch competitive with Matformer~\citep{yan2022periodic} that only uses Euclidean distances and near five times faster than ALIGNN that uses angle information, with the number of parameters similar as ALIGNN. eComFormer is slower than Matformer but faster than ALIGNN due to the complexity of tensor product between vectors.

\begin{table}[h!]
  \caption{Verification of geometric completeness of proposed SE(3) invariant and SO(3) equivariant crystal graphs by reconstructing crystal structures from graph representations.}
  \label{main_com_verify}
  \centering
  \begin{tabular}{l|c|c}
    \toprule
     Structure RMSD & MP & T2~\citep{T2}  \\
    \midrule
    SE(3) invariant & $3.28* 10^{-7}$ & $3.83* 10^{-7}$ \\
    SO(3) equivariant & $3.03* 10^{-7}$ & $3.19* 10^{-7}$ \\
    \bottomrule
  \end{tabular}
\end{table}

\textbf{Completeness verification}. We evaluate the completeness of proposed SE(3) invariant and SO(3) equivariant crystal graphs by reconstructing crystal structures from graph representations. Specifically, we use crystals from the Materials Project and T2 dataset which contains super large crystal cells with more than 700 hundred atoms to demonstrate the robustness of these crystal graphs. The detailed reconstruction process is provided in Appendix.~\ref{app:complete_verification}. After reconstruction, we compare the structure differences between the recovered ones and the input ones using Pymatgen. As shown in Table.~\ref{main_com_verify}, the input crystal structures are fully recovered, with limited structure error (less than $10^{-6}$).

\subsection{Ablation studies}
In this section, we demonstrate the importance of geometric complete crystal graphs and Comformer components by conducting ablation studies on JARVIS formation energy task.

% \begin{table}
% \tiny
%   \caption{Ablation studies on JARVIS formation energy.}
%   \label{jarvis-ablation}
%   \centering

%   \begin{tabular}{lcccc}
%     \toprule
%     Method & inv node layer & equi node layer& inv edge layer & Test MAE (eV/atom)   \\
%     ComFormer &  2 & 0 & 0 &  0.0315 \\
%     ComFormer & 2 & 1 (max order = 1) & 0 & 0.0308  \\
%     eComFormer &  2 & 1 (max order = 2) & 0 &  0.0295\\
%     \midrule
%     ComFormer &  3 & 0 & 0 &  0.0300 \\
%     iComFormer &  3  & 0 & 1 &   0.0277\\
%     \bottomrule
%   \end{tabular}
%   \vspace{-5mm}
% \end{table}
% \begin{wraptable}{R}{7.0cm}
%  \vspace{-0.4cm}
%  \caption{Ablation studies. o denotes rotation order.}
%   \label{jarvis-ablation}
%  \centering
%  \resizebox{0.46\columnwidth}{!}{
%  \begin{tabular}{lcccc}
%     \toprule
%     Method & i-node layer & e-node layer& edge layer & Test MAE   \\
%     ComFormer &  2 & 0 & 0 &  0.0315 \\
%     ComFormer & 2 & 1 (o = 1) & 0 & 0.0308 \\
%     eComFormer &  2 & 1 (o = 2) & 0 &  \textbf{0.0295}\\
%     \midrule
%     ComFormer &  4 & 0 & 0 &  0.0304 \\
%     iComFormer &  4  & 0 & 1 &  \textbf{0.0277}\\
%     \bottomrule
%   \end{tabular}
%  }
% \vspace{-0.2cm}
% \end{wraptable}
\begin{wraptable}{R}{6.0cm}
 \vspace{-0.4cm}
 \caption{Importance of completeness.}
  \label{complete-ablation}
 \centering
 \resizebox{0.39\columnwidth}{!}{
 \begin{tabular}{lccc}
    \toprule
    Method &Complete & Test MAE   \\
     w/o bond vectors  & \xmark & 30.2 \\
    eComFormer  & \cmark &  \textbf{28.4}\\
    \midrule
     w/o bond angles  & \xmark &  28.9 \\
    iComFormer  & \cmark &  \textbf{27.2}\\
    \bottomrule
  \end{tabular}
 }
\vspace{-0.2cm}
\end{wraptable}
\textbf{Importance of geometric complete crystal graphs}. We evaluate the importance of geometric completeness by 
making the proposed crystal graphs incomplete. Specifically, we
omit the edge-wise transformer layer in iComFormer and the equivariant updating layer in eComFormer, which means that we omit bond angles and bond vectors in corresponding graphs. As shown in Table~\ref{complete-ablation}, using the incomplete crystal graphs will degrade the performance of iComFormer by $6.25\%$ from 0.0272 to 0.0289, and the performance of eComFormer by $6.33\%$ from 0.0284 to 0.0302.

% \textbf{Modeling capacity of node-wise transformer layer}. It can be seen from Table~\ref{jarvis-ablation} that by only using three node-wise transformer layers, ComFormer can achieve test MAE of 0.0304, which is still lower than previous works, indicating the considerable modeling power of the node-wise transformer layer.

\section{Conclusion, limitations, and future works}

In this work, we propose 
% the SE(3) invariant and SO(3) equivariant crystal graph representations that achieve geometrically completeness for crystal structures and remain the same under crystal passive symmetries. Furthermore, we develop 
SE(3) ComFormer variants for crystalline materials that utilize geometrically complete crystal graphs and satisfy crystal passive symmetries. The state-of-the-art performances of ComFormer variants on various tasks across three widely-used crystal benchmarks demonstrate the expressive power of proposed geometrically complete crystal graphs and modeling capacity of ComFormer variants. The limitations of our current method include (1) it is designed for crystalline materials and cannot be directly applied to other domains, (2) it cannot model the complete geometric information of amorphous materials, and (3) it currently cannot predict higher order crystal properties. We plan to explore these directions in the future work. We also include discussions of potential corner cases in Appendix~\ref{app:potential_corner} to aid future users of the proposed method in need.

\subsubsection*{Acknowledgments}
We thank Youzhi Luo, Limei Wang, and Haiyang Yu for insightful discussions, and thank Chengkai Liu for drawing 3D visualizations of crystal structures. S.J. acknowledges the support from National Science Foundation grants IIS-2243850 and
CNS-2328395. XF Q. acknowledges the support from the Center for Reconfigurable Electronic Materials Inspired by Nonlinear Dynamics (reMIND), an Energy Frontier Research Center funded by the U.S. Department of Energy, Basic Energy Sciences, under Award Number DE-SC0023353. XN Q. acknowledges the support from National Science Foundation grants CCF-1553281, DMR-2119103, and IIS-2212419.

\bibliography{iclr2024_conference}
\bibliographystyle{iclr2024_conference}

% \newpage

\appendix
\secttoc
\section{Appendix}

\subsection{Limitations of previous methods in achieving geometric completeness for crystalline materials} \label{app:failures}

\subsubsection{Limitations of previous crystal graph neural networks}

Crystalline materials are first converted into graph representations which serve as the input of crystal graph neural networks. Thus, the geometric information considered by the crystal graph neural networks are directly determined by the employed crystal graphs. 

In this subsection, we discuss the \textbf{limitations} of previous crystal graph neural networks in terms of crystal graph representations. These methods include CGCNN~\citep{cgcnn}, SchNet~\citep{schnet}, MEGNET~\citep{megnet}, and GATGNN~\citep{gatgnn} which directly use multi-edge crystal graphs, ALIGNN~\citep{alignn} and M3GNet~\citep{m3gnet} which uses multi-edge crystal graphs with extra bond angles, and Matformer~\citep{yan2022periodic} and PotNet~\citep{potnet} which use additional edge features to capture periodic patterns. 

\begin{figure}[h]
    \centering
    \includegraphics[width=0.95\textwidth]{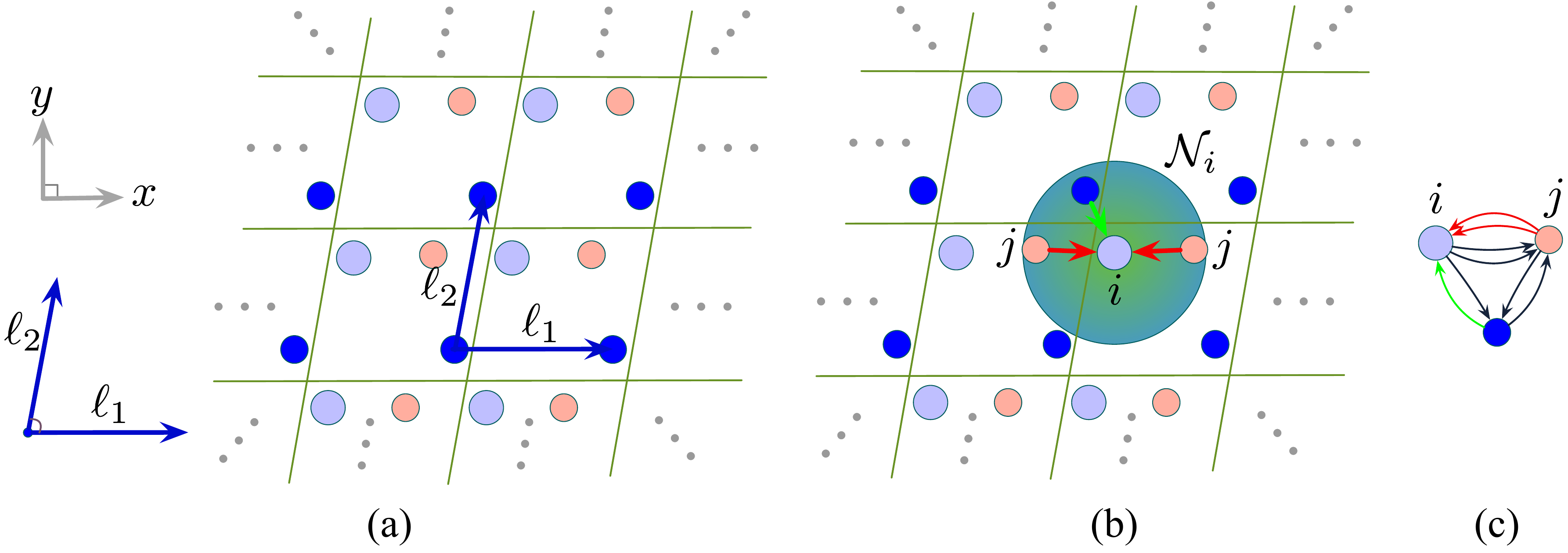}
    \caption{Demonstration of the multi-edge crystal graph. We use blue arrows to represent periodic patterns $\ell_1$ and $\ell_2$ for the shown crystal structure in 2D case. We use circles with different colors to represent different atoms, and light-green lines to represent the periodic boundaries. We use $\mathcal{N}_i$ to represent the neighborhood of node $i$ within the radius, and use red and green arrows to indicate the captured atomic interactions by the multi-edge crystal graph. (a) A crystal structure with periodic patterns $\ell_1$ and $\ell_2$ in 2D case for clarity. (b) The captured atomic interactions by the multi-edge crystal graph, between the red nodes $j$ and center node $i$. (c) The corresponding multi-edge crystal graph. All periodic duplicates of $j$ are mapped to a single node $j$ in the multi-edge crystal graph.}
    \label{fig:multiedge_graph}
\end{figure}

We first recap multi-edge crystal graphs, and then show the limitations of previous works in distinguishing different crystalline materials. 

\textbf{Multi-edge crystal graphs}. The multi-edge crystal graph is proposed by \cite{cgcnn} to capture atomic interactions across periodic boundaries. As shown in Fig.~\ref{fig:multiedge_graph}, every node in the multi-edge crystal graph represents atom $i$ and all its infinite duplicates in 3D space, with positions $\{\hat{\boldsymbol{p}_i} | \hat{\boldsymbol{p}_i} = \boldsymbol{p}_i + k_1\boldsymbol{\ell}_1 + k_2\boldsymbol{\ell}_2 + k_3\boldsymbol{\ell}_3,~ k_1, k_2, k_3 \in \mathbb{Z}\}$ and node feature $\boldsymbol{a}_i$. An edge will be built from node $j$ to node $i$ when the Euclidean distance $||\mathbf{e_{j'i}}||_2$ between a duplicate of $j$ and $i$ satisfies
$||\mathbf{e_{j'i}}||_2=||\boldsymbol{p}_j + k_1^{'}\boldsymbol{\ell}_1 + k_2^{'}\boldsymbol{\ell}_2 + k_3^{'}\boldsymbol{\ell}_3-\boldsymbol{p}_i||_2 \le r$, where $r \in \mathbb{R}$ is the cutoff radius. Each edge has a corresponding edge feature $||\mathbf{e_{j'i}}||_2$.

\begin{figure}[t]
    \centering
    \includegraphics[width=0.9\textwidth]{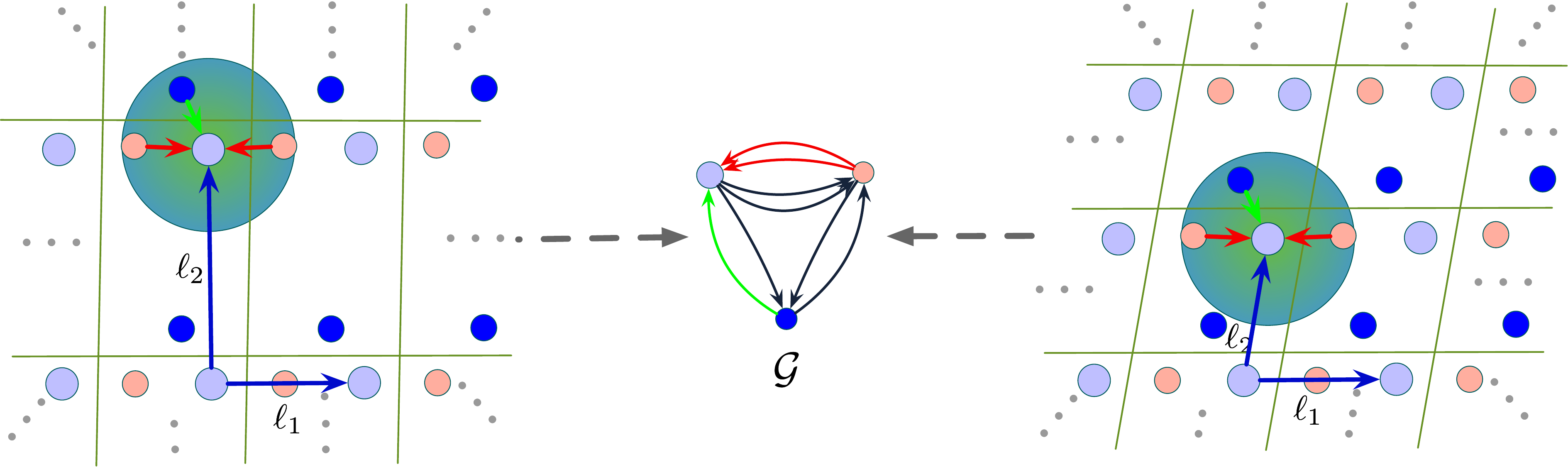}
    \caption{Demonstration that multi-edge crystal graphs fail short to capture periodic patterns. We use blue arrows to represent periodic patterns $\ell_1$ and $\ell_2$ for these two different crystal structures. Due to the missing geometric information of periodic patterns, multi-edge crystal graphs map these two different crystals to the same crystal graph $\mathcal{G}$.}
    \label{fig:multiedge_failure_1}
\end{figure}

\textbf{Limitations of multi-edge crystal graphs}.  (1) It can be seen from Fig.~\ref{fig:multiedge_failure_1} that multi-edge crystal graphs \textbf{fail short to distinguish crystals with different periodic patterns}, and may map different crystals with different properties to the same graph. (2) It can be seen from Fig.~\ref{fig:multiedge_failure_2} that multi-edge crystal graphs \textbf{fail short to distinguish crystalline materials with different unit cell structures}, and may map different crystals with different properties to the same graph. Thus, CGCNN~\citep{cgcnn}, SchNet~\citep{schnet}, MEGNET~\citep{megnet}, and GATGNN~\citep{gatgnn} which directly use multi-edge crystal graphs fail short to achieve geometric completeness for crystalline materials.

\begin{figure}
    \centering
    \includegraphics[width=0.95\textwidth]{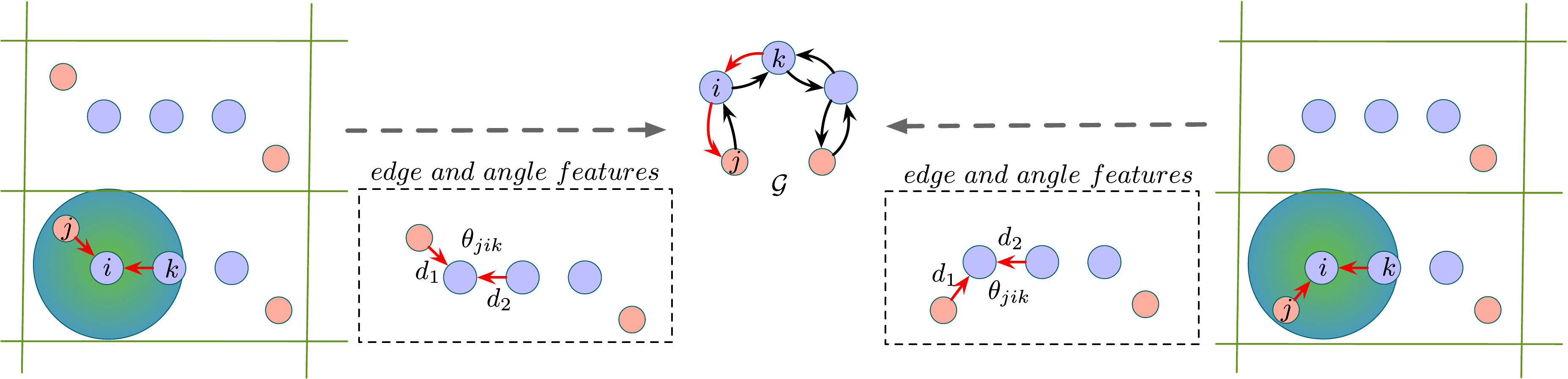}
    \caption{Demonstration that multi-edge crystal graphs fail short to distinguish crystalline materials with different unit cell structures. We use $i, j, k$ to represent three atoms in the unit cell structures of these two crystals, $d_1, d_2$ to represent the encoded bond lengths in the graph, and $\theta_{jik}$ to represent the encoded bond angle between edge $ji$ and $ik$. These two crystals are different but have the identical multi-edge crystal graphs with identical Euclidean distances and bond angles encoded. The number of blue atoms in the middle part is three, and can increase to any arbitrary number to obtain the same conclusion. }
    \label{fig:multiedge_failure_2}
    \vspace{-5mm}
\end{figure}

\textbf{Limitations of encoding bond angles using multi-edge crystal graphs}. Noting that further encoding bond angles using line graphs upon multi-edge crystal graphs still fail short to capture periodic patterns, as shown in Fig.~\ref{fig:multiedge_failure_1}. Additionally, further encoding bond angles using line graphs upon multi-edge crystal graphs still cannot distinguish crystals shown in Fig.~\ref{fig:multiedge_failure_2} since bond angles are identical for these two crystals. Hence, ALIGNN~\citep{alignn} and M3GNet~\citep{m3gnet} which uses multi-edge crystal graphs with extra bond angles may map crystals with different properties to the same crystal graphs.

\textbf{Limitations of Matformer and PotNet}. By directly encoding periodic patterns using self-connecting edges, Matformer~\citep{yan2022periodic} can distinguish the two crystals shown in Fig.~\ref{fig:multiedge_failure_1}. However, Matformer cannot distinguish crystals with the same periodic patterns but different unit cell structures as shown in Fig.~\ref{fig:multiedge_failure_2} due to the fact that the Euclidean distances encoded in the crystal graphs are identical. PotNet~\citep{potnet} encodes periodic patterns by using infinite summations of two-body atomic potentials. However, PotNet only considers two-body interactions and using summation operation with errors, resulting in the inability to achieve geometric completeness.

\subsubsection{Limitations of previous complete molecule graph neural networks} \label{app:limi_molecule}

To achieve completeness for molecules with finite number of atoms, the molecule graphs used are radius graphs and fully-connected graphs. 

\begin{figure}[t]
    \centering
    \includegraphics[width=0.8\textwidth]{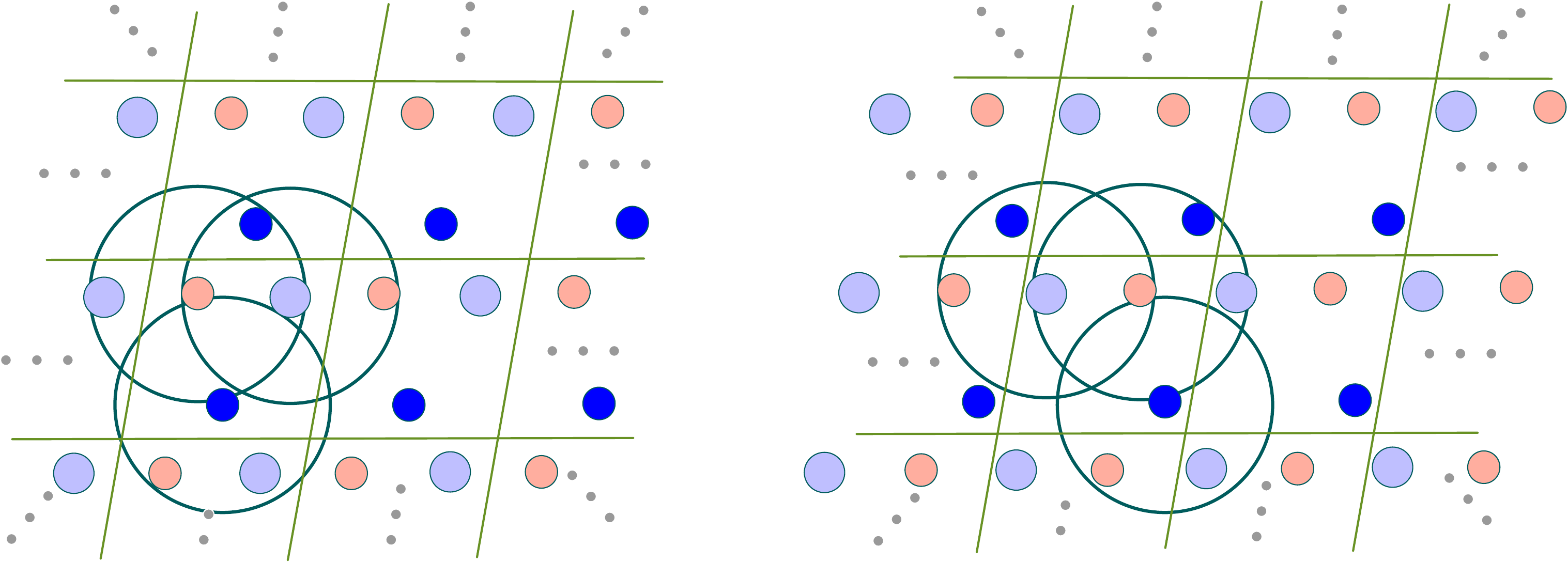}
    \caption{Demonstration that directly using molecule graphs will break periodic invariance. We use green circles to denote the local radius region for every atom in the unit cell.
    The two crystals shown above are identical and obtained by using periodic transformation (shifting the periodic boundaries).
    Molecule graphs treat every atom within the radius as a single node in the constructed graph, and will include two blue nodes in total for the left crystal but include three blue nodes in total for the right crystal. This is because molecule graphs break periodic invariance for crystals.}
    \label{fig:molecule_fail_1}
\end{figure}

\textbf{Limitations of methods using radius graphs}. SphereNet~\citep{liu2021spherical} and ComENet~\citep{wang2022comenet} use radius graphs, which fail short to capture periodic patterns similar to the multi-edge crystal graphs as shown in Fig.~\ref{fig:multiedge_failure_1}. Additionally, molecule graphs break periodic invariance for crystal structures and will generate graphs with different number of nodes for the same crystal as shown in Fig.~\ref{fig:molecule_fail_1}.

\textbf{Limitations of methods using fully-connected graphs}. Another approach to achieve geometric completeness for molecules will be constructing fully-connected graphs. However, \textbf{crystalline materials are periodic structures}, which makes constructing fully-connected graphs infeasible. Additionally, constructing fully-connected graphs for super cell structures will break periodic invariance as shown in Fig.~\ref{fig:molecule_fail_2}.

\begin{figure}
    \centering
    \includegraphics[width=0.8\textwidth]{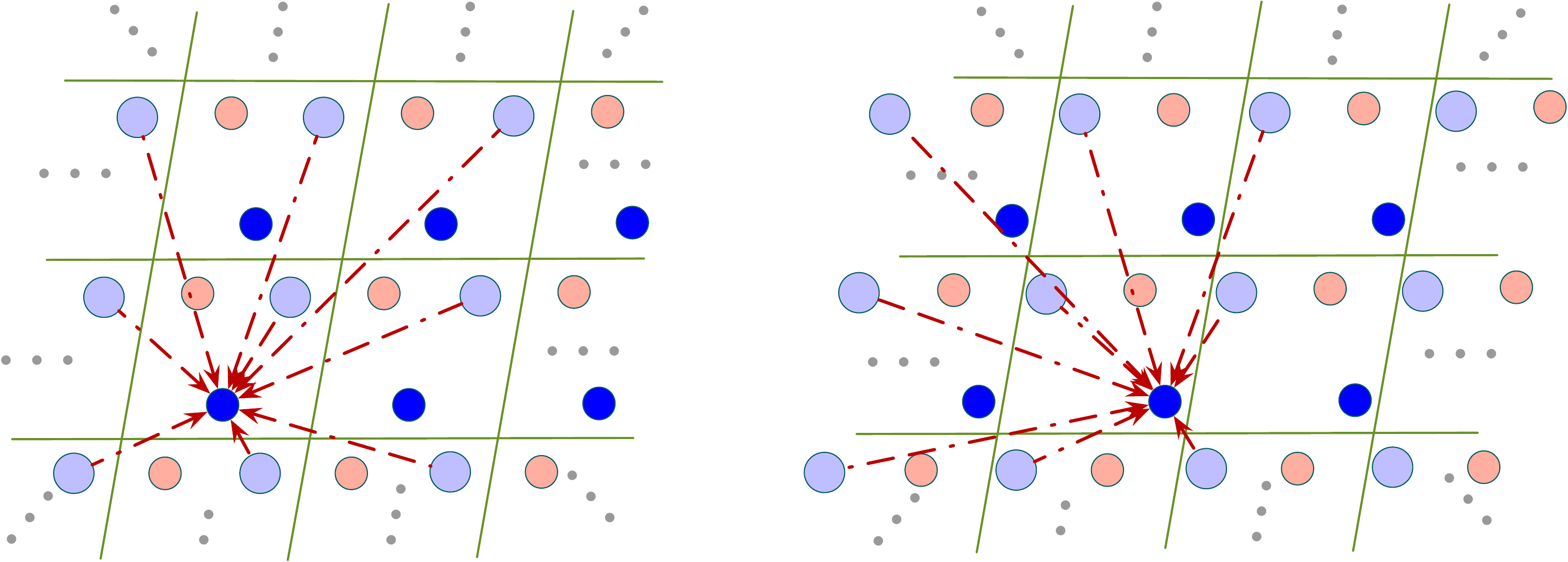}
    \caption{Demonstration that constructing fully-connected graphs for super cell crystal (expanding unit cell structures using surrounding unit cells) structures breaks periodic invariance (cannot remain the same when the periodic boundaries are shifted).}
    \label{fig:molecule_fail_2}
\end{figure}

\subsubsection{Limitations of previous complete crystal matrix form representations} \label{app:limi_cry_matrix}

There are two recent works, AMD-ML~\citep{AMD_ML} and PDD-ML~\citep{PDD_ML} that use previously established powerful matrix form crystal representations, AMD and PDD.

\textbf{Limitations of AMD-ML}. Although AMD-ML uses complete crystal matrix representation AMD, it converts the matrix form representation to a single invariant vector using summation as the input of machine learning algorithms. By doing this transformation, the vector itself cannot fully recover the crystal structure and loses some structural information. Additionally, AMD-ML does not consider atom-type information of crystals and can only be used for stable crystal structures.

\textbf{Limitations of PDD-ML}. PDD-ML is more powerful than AMD-ML by converting the PDD matrix into crystal graphs like CGCNN-graph. By doing so, PDD-ML considers atom-type information. However, to achieve completeness for the PDD matrix, a large enough $k$ needs to be predetermined and fixed before training for any future test samples, where $k$ needs to be larger than the maximum number of atoms in the cell for all test crystals. This means that the PDD-ML graph needs to have $k$ larger than 200 for The Materials Project, and larger than 700 for the T2 dataset that they tested on. Due to this practical concern, PDD-ML only uses $k_{max}=25$, which is far from completeness for their experiments on the Materials Project and T2 dataset. Additionally, the geometric information considered in PDD-ML for a fixed number of neighbors is the same as CGCNN with the same $k$ as mentioned by the authors and verified by experiments in their paper. As demonstrated by extensive experimental results, the proposed Comformer is significantly more effective than CGCNN in all tasks. Additionally, by using $k=25$ nearest neighbors, our proposed graph representations achieve completeness for the T2 dataset as shown in Tab.~\ref{complete-verify-2}.

\subsection{Proofs that the proposed crystal graph representations and ComFormer satisfy crystal passive symmetries} \label{app:1}
In this section, we prove that the proposed invariant crystal graph representation is SE(3) invariant, and the proposed equivariant crystal graph representation is SO(3) equivariant, and both of them are periodic invariant. We use the same notations following the main paper as follows. 

Crystal structures are represented by $\mathbf{M} = (\mathbf{A}$, $\mathbf{P}$, $\mathbf{L})$, where $\mathbf{A}=[\boldsymbol{a}_1, \boldsymbol{a}_2, \cdots, \boldsymbol{a}_n] \in \mathbb{R}^{d_a \times n}$ contains the $d_a$-dimensional feature vectors for $n$ atoms in the unit cell, $\textbf{P} = [\boldsymbol{p}_1, \boldsymbol{p}_2, \cdots, \boldsymbol{p}_n] \in \mathbb{R}^{3 \times n}$ contains the 3D Euclidean positions of $n$ atoms in the unit cell, and $\textbf{L} = [\boldsymbol{\ell}_1, \boldsymbol{\ell}_2, \boldsymbol{\ell}_3] \in \mathbb{R}^{3\times 3}$ describes the repeating patterns of the unit cell structure in 3D space. Different from molecules and proteins, crystal structures naturally repeat infinitely in space. Formally, the infinite crystal structure can be described as
\begin{equation}
\label{equi_q}
\begin{aligned}
    \hat{\mathbf{P}} = & \{\hat{\boldsymbol{p}_i} | \hat{\boldsymbol{p}_i} = \boldsymbol{p}_i + k_1\boldsymbol{\ell}_1 + k_2\boldsymbol{\ell}_2 + k_3\boldsymbol{\ell}_3,~ k_1, k_2, k_3 \in \mathbb{Z}, i \in \mathbb{Z}, 1 \le i \le n \}, 
    \\
    \hat{\mathbf{A}} = & \{\hat{\boldsymbol{a}_i} | \hat{\boldsymbol{a}_i} = \boldsymbol{a}_i, i \in \mathbb{Z}, 1 \le i \le n \},
\end{aligned}
\end{equation}
where the set $\hat{\mathbf{P}}$ represents all 3D positions for every atom $i$ in the unit cell structure, and the set $\hat{\mathbf{A}}$ represents the corresponding atom feature vector for every atom.

\subsubsection{SE(3) invariant crystal graph representation} \label{app:1.1}
As defined in the main paper, a crystal graph representation is SE(3) invariant if, for any rotation transformation $\bold{R} \in \mathbb{R}^{3 \times 3}, |\bold{R}|= 1$ and translation transformation $b \in \mathbb{R}^{3}$ , the crystal graph remains the same, $f(\mathbf{A}, \mathbf{P}, \mathbf{L}) = f(\mathbf{A}, \bold{R}\mathbf{P}+b, \bold{R}\mathbf{L})$. In the following, we prove the SE(3) invariance and periodic invariance step by step following the invariant crystal graph construction process. 

First, we construct the lattice representation $\mathbf{e_{ii_1}} , \mathbf{e_{ii_2}}, \mathbf{e_{ii_3}}$ for every node, by using the process demonstrated in Sec.3.3. The selection of $\mathbf{e_{ii_1}} , \mathbf{e_{ii_2}}, \mathbf{e_{ii_3}}$ is purely based on three components, including Euclidean distances, bond angles, and whether $\mathbf{e_{ii_1}} , \mathbf{e_{ii_2}}, \mathbf{e_{ii_3}}$ form the right-hand system. And these three components are naturally SE(3) invariant which remain unchanged under any rotation transformation $\bold{R} \in \mathbb{R}^{3 \times 3}, |\bold{R}|= 1$ and translation transformation $b \in \mathbb{R}^{3}$. Additionally, these three components will remain the same for a given node $i$ and all its duplicates whatever unit cell structure you choose to represent the infinite crystal structure. 
Therefore, the choice of duplicates $i_1, i_2, i_3$ in the constructed lattice representation is SE(3) invariant and periodic invariant. 
Additionally, due to the fact that reflection transformations will change whether $\mathbf{e_{ii_1}} , \mathbf{e_{ii_2}}, \mathbf{e_{ii_3}}$ form the right-hand system, our proposed invariant crystal graph representation can distinguish chiral crystal structures. 

Next, we use the minimum unit cell structure with $n$ atoms and construct a crystal graph with $n$ nodes. This step is already handled by the JARVIS, MP, and MatBench datasets. Due to the fact that all the minimum unit cell structures for a given crystal share the same number of atoms and corresponding atom features, this step is SE(3) invariant and periodic invariant.

Then, we build edges from neighboring nodes to every node $i$. Specifically, an edge will be built from node $j$ to node $i$ when the Euclidean distance $||\mathbf{e_{j'i}}||_2$ between a duplicate of $j$ and $i$ satisfies
$||\mathbf{e_{j'i}}||_2=||\boldsymbol{p}_j + k_1^{'}\boldsymbol{\ell}_1 + k_2^{'}\boldsymbol{\ell}_2 + k_3^{'}\boldsymbol{\ell}_3-\boldsymbol{p}_i||_2 \le r$, where $r \in \mathbb{R}$ is the cutoff radius. Duplicates $i_1$, $i_2$, and $i_3$ are included as neighbors of $i$ to encode periodic patterns. Due to the fact that the Euclidean distance $||\mathbf{e_{j'i}}||_2$ between a duplicate of $j$ and $i$ remains unchanged under SE(3) transformations and across different unit cell structure representations, and the choice of duplicates $i_1, i_2, i_3$ in the constructed lattice representation is SE(3) invariant and periodic invariant as shown in the first step, the neighborhood of node $i$ is SE(3) invariant and periodic invariant.

In the end, we construct edge features for every edge. In the proposed invariant crystal graph, each edge has a corresponding feature vector $\{||\mathbf{e_{j'i}}||_2, \theta_{j'i,ii_1}, \theta_{j'i,ii_2}, \theta_{j'i,ii_3}\}$, where $\theta_{j'i,ii_1}$, $\theta_{j'i,ii_2}$, and $\theta_{j'i,ii_3}$ denoting the angles between $\mathbf{e_{j'i}}$ and $\{\mathbf{e_{ii_1}}, \mathbf{e_{ii_2}}, \mathbf{e_{ii_3}}\}$, respectively. Due to the fact that $\mathbf{e_{j'i}}$ and $\{\mathbf{e_{ii_1}}, \mathbf{e_{ii_2}}, \mathbf{e_{ii_3}}\}$ rotate accordingly to arbitrary rotation matrix $\bold{R} \in \mathbb{R}^{3 \times 3}, |\bold{R}|= 1$, the relative bond angles $\theta_{j'i,ii_1}$, $\theta_{j'i,ii_2}$, and $\theta_{j'i,ii_3}$ remain unchanged under SE(3) transformations and across different unit cell structure representations. Thus, the construct edge features for every edge are SE(3) invariant and periodic invariant. 

By combining these four steps in the invariant crystal graph construction process, we complete the proof that the proposed invariant crystal graph representation is SE(3) invariant and periodic invariant. 

\subsubsection{SO(3) equivariant crystal graph representation}
\label{app:SO3}
As defined in the main paper, a crystal graph representation is SO(3) equivariant if, for any rotation transformation $\bold{R} \in \mathbb{R}^{3 \times 3}, |\bold{R}|= 1$ and translation transformation $b \in \mathbb{R}^{3}$ , the crystal graph rotates accordingly, $\bold{R}f(\mathbf{A}, \mathbf{P}, \mathbf{L}) = f(\mathbf{A}, \bold{R}\mathbf{P}+b, \bold{R}\mathbf{L})$. In the following, we prove the SO(3) equivariance and periodic invariance step by step following the equivariant crystal graph construction process. 

For the lattice representation construction and the node and edge construction, it is the same as the invariant crystal graph. Thus, the choice of duplicates $i_1, i_2, i_3$ in the constructed lattice representation and the node and edge construction is SE(3) invariant and periodic invariant and can distinguish chiral crystal structures. 

Then, for the edge features, $||\mathbf{e_{ji}}||_2$ is naturally SE(3) invariant and periodic invariant, while $\mathbf{e_{ji}}$ is SO(3) equivariant and periodic invariant. 

Overall, the whole equivariant crystal graph representation is SO(3) equivariant and periodic invariant, with SE(3) invariant edge feature $||\mathbf{e_{ji}}||_2$ and SO(3) equivariant edge feature $\mathbf{e_{ji}}$.

\textbf{Relationship between unit cell SE(3) invariant and SO(3) equivariant functions}. Here we show the relationship between unit cell SE(3) invariant and SO(3) equivariant functions. For any SO(3) equivariant function $f$, it satisfies $\bold{R}f(\mathbf{A}, \mathbf{P}, \mathbf{L}) = f(\mathbf{A}, \bold{R}\mathbf{P}+b, \bold{R}\mathbf{L})$. Then, it can be seen that $g = ||f||_2$ is a corresponding unit cell SE(3) invariant function. The proof is as follows,
\begin{align}
        g(\mathbf{A}, \bold{R}\mathbf{P}+b, \bold{R}\mathbf{L}) = &||f(\mathbf{A}, \bold{R}\mathbf{P}+b, \bold{R}\mathbf{L})||_2\\
        =&||\bold{R}f(\mathbf{A}, \mathbf{P}, \mathbf{L})||_2\\
        =&||f(\mathbf{A}, \mathbf{P}, \mathbf{L})||_2\\
        =& g(\mathbf{A}, \mathbf{P}, \mathbf{L}),
\end{align}
Then the proof that $g = ||f||_2$ is a unit cell SE(3) invariant function is complete.

\subsubsection{SE(3) invariance of ComFormer}

\textbf{iComFormer}. The usage of SE(3) invariant crystal graph representations make the iComFormer SE(3) invariant. Thus, further proof is not needed.

\textbf{eComFormer}. The major difference between eComFormer and iComFormer is the usage of vector edge feature $\mathbf{e_{ji}}$ in node-wise equivariant updating layer. Due to the equivariant property of tensor product operations of $\textbf{TP}_1, \textbf{TP}_2$, and invariant property of tensor product operation $\textbf{TP}_0$, after two tensor product layers in node-wise equivariant updating layer, the updated node feature $\boldsymbol{f}_{i}^{l*}$ is SE(3) invariant. 

\subsection{Proofs of base case and inductive step in Proposition \ref{prop:1}} \label{app:proof}

\begin{definition}[Isometric crystal structures] \label{isometric_crystal}
    Two crystal structures $\mathbf{M} = (\mathbf{A}$, $\mathbf{P}$, $\mathbf{L})$ and $\mathbf{M}' = (\mathbf{A}'$, $\mathbf{P}'$, $\mathbf{L}')$ are isometric if $\exists$ $\bold{R} \in \mathbb{R}^{3 \times 3}, |\bold{R}|= 1$ and $b \in \mathbb{R}^{3}$, such that
    \begin{align*}
        \hat{\mathbf{P}}' = &\bold{R} \bullet \hat{\mathbf{P}} + b = \{\bold{R}\hat{\boldsymbol{p}_i} + b | \hat{\boldsymbol{p}_i} = \boldsymbol{p}_i + k_1\boldsymbol{\ell}_1 + k_2\boldsymbol{\ell}_2 + k_3\boldsymbol{\ell}_3,~ k_1, k_2, k_3 \in \mathbb{Z}, i \in \mathbb{Z}, 1 \le i \le n \}, \\
        \hat{\mathbf{A}}' = &\hat{\mathbf{A}} = \{\hat{\boldsymbol{a}_i} | \hat{\boldsymbol{a}_i} = \boldsymbol{a}_i, i \in \mathbb{Z}, 1 \le i \le n \},
    \end{align*}
    where $\hat{\mathbf{P}}, \hat{\mathbf{P}}'$ are the infinite set representing all 3D positions for every atom in the two crystal structures and $\hat{\mathbf{A}}, \hat{\mathbf{A}}'$ are the corresponding atom types as described in Sec.~\ref{sec:crystalstructure}, and $\bold{R} \bullet \hat{\mathbf{P}} + b$ describes applying rotation transformation $\bold{R}$ and translation $b$ to every item in the infinite set $\hat{\mathbf{P}}$.
\end{definition}

\subsubsection{Proofs of base case}

\begin{proof}
    We employ contradiction to prove it. 
    We first assume there are two different infinite crystal structures with a single atom in the unit cells ($n=1$) that have the same SO(3) equivariant crystal graph or SE(3) invariant crystal graph, and then demonstrate this assumption leads to contradiction.
    
    (1) Proof for SO(3) equivariant crystal graph. Following the SO(3) equivariant crystal graph construction process in Sec.~\ref{Sec:SO3}, we have obtained lattice representation $\{\mathbf{e_{ii_1}}, \mathbf{e_{ii_2}}, \mathbf{e_{ii_3}}\}$ where $i$ represents the single atom in the unit cells, and $i_1$, $i_2$, $i_3$ are three nearest duplicates of node $i$, satisfying right-hand system. Thus, $\mathbf{e_{ii_1}}, \mathbf{e_{ii_2}}, \mathbf{e_{ii_3}}$ form a minimum lattice structure for these two crystals. 
    Then, we can have
    \begin{align}
        \hat{\mathbf{P}}_1 = &\{\hat{\boldsymbol{p}_i} | \hat{\boldsymbol{p}_i} = \boldsymbol{p}_i + k_1\mathbf{e_{ii_1}} + k_2\mathbf{e_{ii_1}} + k_3\mathbf{e_{ii_1}},~ k_1, k_2, k_3 \in \mathbb{Z}\}, \\
        \hat{\mathbf{P}}_2 = &\{\hat{\boldsymbol{p}_i}' | \hat{\boldsymbol{p}_i}' = \boldsymbol{p}_i' + k_1\mathbf{e_{ii_1}} + k_2\mathbf{e_{ii_1}} + k_3\mathbf{e_{ii_1}},~ k_1, k_2, k_3 \in \mathbb{Z}\}, 
    \end{align}

    where $\hat{\mathbf{P}}_1$ and $\hat{\mathbf{P}}_2$ are the infinite set for atom positions of these two different crystal structures. By using $b = \boldsymbol{p}_i' - \boldsymbol{p}_i$, we can have $\hat{\mathbf{P}}_2 = \hat{\mathbf{P}}_1 + b$.
    Then, according to Def.~\ref{isometric_crystal}, it can be seen that these two previously assumed different crystal structures are identical, and it contradicts the assumption. 

    (2) Proof for SE(3) invariant crystal graph. Following the SE(3) invariant crystal graph construction process in Sec.~\ref{Sec:SE3}, we have obtained lattice representation $\{\mathbf{e_{ii_1}}, \mathbf{e_{ii_2}}, \mathbf{e_{ii_3}}\}$ where $i$ represents the single atom in the unit cells, and $i_1$, $i_2$, $i_3$ are three nearest duplicates of node $i$, satisfying right-handed system. Thus, $\mathbf{e_{ii_1}}, \mathbf{e_{ii_2}}, \mathbf{e_{ii_3}}$ form a minimum lattice structure for these two crystals. And duplicates $i_1$, $i_2$, and $i_3$ are encoded as neighbors of node $i$. Then, we have obtained $\{ ||\mathbf{e_{i_1i}}||_2, \theta_{i_1i,ii_1}, \theta_{i_1i,ii_2}, \theta_{i_1i,ii_3} \}$ as the edge feature from $i_1$ to $i$, $\{ ||\mathbf{e_{i_2i}}||_2,\theta_{i_2i,ii_1}, \theta_{i_2i,ii_2}, \theta_{i_2i,ii_3} \}$ as the edge feature from $i_2$ to $i$, and $\{ ||\mathbf{e_{i_3i}}||_2,\theta_{i_3i,ii_1}, \theta_{i_3i,ii_2}, \theta_{i_3i,ii_3} \}$ as the edge feature from $i_3$ to $i$. Since the SE(3) invariant crystal graph is identical for these two different crystals, these angles are all identical to each other. Based on the Proof for SO(3) equivariant crystal graph above, we only need to prove the minimum lattice structures of these two crystals are identical, to contradict the assumption that these two crystals are different. Based on the edge features from node $i_1, i_2, i_3$ to $i$, the minimum lattice structures of these two crystals can be recovered as follows.

    For the first lattice vector $\mathbf{e_{ii_1}}$ and $\mathbf{e'_{ii_1}}$ for these two crystals, we have
    \begin{align}
        \mathbf{e_{ii_1}} = &||\mathbf{e_{i_1i}}||_2 \bold{x} + 0 \bold{y} + 0 \bold{z}, \\
        \mathbf{e'_{ii_1}} = &||\mathbf{e_{i_1i}}||_2 \bold{x} + 0 \bold{y} + 0 \bold{z}, 
    \end{align}
    where $\bold{x}, \bold{y}, \bold{z}$ are unit vectors in a right-hand coordinate system, which means $||\bold{x}||_2=1, ||\bold{y}||_2=1, ||\bold{z}||_2=1$ and $\bold{x}\cdot\bold{y}=0, \bold{x}\cdot\bold{z}=0, \bold{y}\cdot\bold{z}=0$, where $\cdot$ denotes dot product. According to Def.~\ref{isometric_crystal}, for this single atom case, all other choices of coordinate system can be converted to the above representations by applying a rotation matrix $\bold{R} \in \mathbb{R}^{3 \times 3}, |\bold{R}|= 1$.

    For the second lattice vector $\mathbf{e_{ii_2}}$ and $\mathbf{e'_{ii_2}}$ for these two crystals, we have
    \begin{align}
        \mathbf{e_{ii_2}} = & ||\mathbf{e_{i_2i}}||_2cos(\theta_{i_2i,ii_1}) \bold{x} + ||\mathbf{e_{i_2i}}||_2sin(\theta_{i_2i,ii_1}) \bold{y} + 0 \bold{z}, \\
        \mathbf{e'_{ii_2}} = & ||\mathbf{e_{i_2i}}||_2cos(\theta_{i_2i,ii_1}) \bold{x} + ||\mathbf{e_{i_2i}}||_2sin(\theta_{i_2i,ii_1}) \bold{y} + 0 \bold{z}, 
    \end{align}
    Again, according to Def.~\ref{isometric_crystal}, all other choices to represent the second lattice vector can be converted to the above representations by applying a rotation matrix $\bold{R} \in \mathbb{R}^{3 \times 3}, |\bold{R}|= 1$. 

    Then, to prove that these two crystals share the same minimum lattice structure, we only need to prove the third lattice vector is identical. For the third lattice vectors $\mathbf{e_{ii_3}}$ and $\mathbf{e'_{ii_3}}$ for these two crystals, we have

    \begin{align}
        \mathbf{e_{ii_3}} \cdot \mathbf{e_{ii_3}} =  \mathbf{e'_{ii_3}} \cdot \mathbf{e_{ii_3}} = ||\mathbf{e_{i_3i}}||_2^2, \\
        \mathbf{e_{ii_3}} \cdot \mathbf{e_{ii_2}} =  \mathbf{e'_{ii_3}} \cdot \mathbf{e_{ii_2}} = ||\mathbf{e_{i_3i}}||_2||\mathbf{e_{i_2i}}||_2cos(\theta_{i_3i,ii_2}), \\
        \mathbf{e_{ii_3}} \cdot \mathbf{e_{ii_1}} =  \mathbf{e'_{ii_3}} \cdot \mathbf{e_{ii_1}} = ||\mathbf{e_{i_3i}}||_2||\mathbf{e_{i_1i}}||_2cos(\theta_{i_3i,ii_1}), 
    \end{align}
    Then we can have
    \begin{align}
        ||\mathbf{e_{ii_3}}||_2 =  ||\mathbf{e'_{ii_3}}||_2  = ||\mathbf{e_{i_3i}}||_2, \\
        (\mathbf{e_{ii_3}} - \mathbf{e'_{ii_3}}) \cdot \mathbf{e_{ii_2}} = 0, \\
        (\mathbf{e_{ii_3}} - \mathbf{e'_{ii_3}}) \cdot \mathbf{e_{ii_1}} = 0, 
    \end{align}
    which means
        $(\mathbf{e_{ii_3}} - \mathbf{e'_{ii_3}}) = \mathbf{0}$ (identical), or 
        $(\mathbf{e_{ii_3}} - \mathbf{e'_{ii_3}}) = c \mathbf{e_{ii_1}} \times \mathbf{e_{ii_2}}$, where $c\in \mathbb{R}$ and $c\ne 0$. 

    Additionally, due to the right hand system constraint during the graph construction process, we have 
    \begin{align}
        \mathbf{e_{ii_1}} \times \mathbf{e_{ii_2}} \cdot \mathbf{e_{ii_3}} > 0, \\
        \mathbf{e_{ii_1}} \times \mathbf{e_{ii_2}} \cdot \mathbf{e'_{ii_3}} > 0.
    \end{align}
    Then we have 
    \begin{align}
        \mathbf{e_{ii_1}} \times \mathbf{e_{ii_2}} \cdot \mathbf{e_{ii_3}} = \frac{1}{c} (\mathbf{e_{ii_3}} - \mathbf{e'_{ii_3}}) \cdot \mathbf{e_{ii_3}} = \frac{1}{c}(||\mathbf{e_{ii_3}}||^2_2 - \mathbf{e'_{ii_3}} \cdot \mathbf{e_{ii_3}}) > 0, \\
        \mathbf{e_{ii_1}} \times \mathbf{e_{ii_2}} \cdot \mathbf{e'_{ii_3}} = \frac{1}{c} (\mathbf{e_{ii_3}} - \mathbf{e'_{ii_3}}) \cdot \mathbf{e'_{ii_3}} = \frac{1}{c}(\mathbf{e_{ii_3}} \cdot \mathbf{e'_{ii_3}} - ||\mathbf{e'_{ii_3}}||^2_2) > 0, \\
    \end{align}
    we then can derive that 
    \begin{equation}
        ||\mathbf{e_{ii_3}}||^2_2 > \mathbf{e'_{ii_3}} \cdot \mathbf{e_{ii_3}} = \mathbf{e_{ii_3}} \cdot \mathbf{e'_{ii_3}} > ||\mathbf{e'_{ii_3}}||^2_2
    \end{equation}
    which contradicts that $||\mathbf{e_{ii_3}}||^2_2 = ||\mathbf{e'_{ii_3}}||^2_2$. Thus, $(\mathbf{e_{ii_3}} - \mathbf{e'_{ii_3}}) = \mathbf{0}$, and these two crystals have the identical minimum lattice structure. And this contradicts that these two crystals are different. Thus, the proof for SE(3) invariant crystal graph of Base Case is done.
    
\end{proof}

\subsubsection{Proofs of inductive step}

Here we prove the relative position of newly added node $j$ is uniquely determined by the proposed SO(3) equivariant crystal graph and SE(3) invariant crystal graph. 

\begin{proof}
    We employ contradiction to prove it. We first assume there are two different relative positions of $j, j'$ that have the same SO(3) equivariant crystal graph or SE(3) invariant crystal graph, and then demonstrate this assumption leads to contradiction. 

    (1) Proof for SO(3) equivariant crystal graph. Following the SO(3) equivariant crystal graph construction process in Sec.~\ref{Sec:SO3}, for any node $i$ in the $m$ nodes already determined and $i \in  \mathcal{N}_j$, we have edge feature $\mathbf{e_{ij}}$. And the relative position of node $j$ and $j'$ according to node $i$ are $\boldsymbol{p}_{ji} = \mathbf{e_{ij}}$ and $\boldsymbol{p}_{j'i} = \mathbf{e_{ij}}$, which contradicts the assumption. The proof is finished. 

    (2) Proof for SE(3) invariant crystal graph. Following the SE(3) invariant crystal graph construction process in Sec.~\ref{Sec:SE3}, for any node $i$ in the $m$ nodes already determined and $i \in  \mathcal{N}_j$, we have edge feature $\{ ||\mathbf{e_{ji}}||_2, \theta_{ji,ii_1}, \theta_{ji,ii_2}, \theta_{ji,ii_3} \}$. Then, for the relative positions $\boldsymbol{p}_{ji}$ and $\boldsymbol{p}_{j'i}$ of $j$ and $j'$ according to $i$, we have

    \begin{align}
        \boldsymbol{p}_{ji} \cdot \mathbf{e_{ii_1}} = ||\mathbf{e_{ji}}||_2 ||\mathbf{e_{ii_1}}||_2 cos(\theta_{ji,ii_1}) = \boldsymbol{p}_{j'i} \cdot \mathbf{e_{ii_1}}, \\
        \boldsymbol{p}_{ji} \cdot \mathbf{e_{ii_2}} = ||\mathbf{e_{ji}}||_2 ||\mathbf{e_{ii_2}}||_2 cos(\theta_{ji,ii_2}) = \boldsymbol{p}_{j'i} \cdot \mathbf{e_{ii_2}}, \\
        \boldsymbol{p}_{ji} \cdot \mathbf{e_{ii_3}} = ||\mathbf{e_{ji}}||_2 ||\mathbf{e_{ii_3}}||_2 cos(\theta_{ji,ii_3}) = \boldsymbol{p}_{j'i} \cdot \mathbf{e_{ii_3}}, 
    \end{align}
    Then we can have 
    \begin{align}
        (\boldsymbol{p}_{ji} - \boldsymbol{p}_{j'i}) \cdot \mathbf{e_{ii_1}} = 0, \\
        (\boldsymbol{p}_{ji} - \boldsymbol{p}_{j'i}) \cdot \mathbf{e_{ii_2}} = 0, \\
        (\boldsymbol{p}_{ji} - \boldsymbol{p}_{j'i}) \cdot \mathbf{e_{ii_3}} = 0, 
    \end{align}
    and because $\mathbf{e_{ii_1}}, \mathbf{e_{ii_1}}, \mathbf{e_{ii_3}}$ form a set of basis for 3D coordinate system, we have $(\boldsymbol{p}_{ji} - \boldsymbol{p}_{j'i}) = \mathbf{0}$, which contradicts the assumption. The proof is complete. 
\end{proof}

% \begin{lemma}
% \label{lemma:1}
% A crystal graph representation is geometrically complete if, by using the crystal graph representation, the infinite crystal structure can be fully reconstructed without error.
% \end{lemma}

% This lemma is straightforward and the following proofs will be based on it. 

\subsection{Verification of geometric completeness} \label{app:complete_verification}

\begin{figure}[h]
    \centering
    \includegraphics[width=0.99\linewidth]{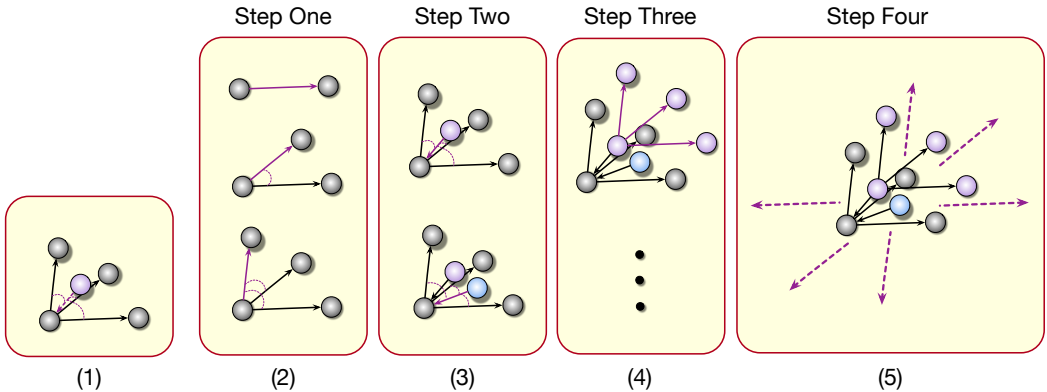}
    \caption{
    Illustrations of steps reconstructing infinite crystal structure using proposed crystal graph representation. The needed geometric information in each step is marked by purple. (1) The edge information in our crystal graph representation contains the edge length and three angles between three periodic lattice vectors. (2) Construct the three periodic lattice vectors, by only using the edge length to place the first vector, and using the length and relative angle to place the second vector. For the third vector, two relative angles and the bond length are used. Here we choose the right-handed coordinate system for the periodic lattices, thus the orientation is fully determined. (3) By using three angles and the bond length, the position of every neighboring node is determined. (4) Align the lattice vectors for neighboring node since they share the same repeating patterns and repeat step two. (5) Repeat step three until no node is left, and then recover the infinite structure using periodic lattice vectors. }
    \label{fig:complete}
\end{figure}

% We verify the geometric completeness of our proposed SE(3) invariant crystal graphs using Lemma~\ref{lemma:1}, with the detailed reconstruction process illustrated in Fig.~\ref{fig:complete}. 

Geometrically complete crystal graphs can fully recover the input infinite crystal structures. We further verify the geometric completeness of our proposed SE(3) invariant crystal graph and SO(3) equivariant crystal graph by reconstructing the crystal structures from graphs and then compare the reconstructed crystal structures with the original ones. 

The reconstruction of infinite crystal structure using the proposed invariant crystal graph is decomposed into four steps, including (1) the reconstruction of lattice representation for each node, (2) the reconstruction of neighboring node positions for a given node, (3) the reconstruction of unit cell structure, and (4) the reconstruction of infinite crystal structure, as shown in Fig.~\ref{fig:complete}. For step one, the information used includes node feature $\boldsymbol{a}_i$, edge features $||\mathbf{e_{ii_1}}||_2$, $||\mathbf{e_{ii_2}}||_2$, $||\mathbf{e_{ii_3}}||_2$, and $\theta_{i_2i,ii_1}$, $\theta_{i_3i,ii_1}$, $\theta_{i_3i,ii_2}$, and the right-hand system. For step two, the information used includes node feature $\boldsymbol{a}_j$, edge features $||\mathbf{e_{ji}}||_2$, and $\theta_{ji,ii_1}$, $\theta_{ji,ii_2}$, $\theta_{ji,ii_3}$. For step three, the information used is the fact that lattice representation is shared and aligned perfectly for a given crystal across all atoms. For step four, the information used 
is the constructed lattice representation in step one. 

The reconstruction of infinite crystal structure using the proposed equivariant crystal graph can be done similarly. 
Specifically, in step one, the lattice representation for node $i$ can be directly reconstructed by using $\mathbf{e_{ii_1}}$, $\mathbf{e_{ii_2}}$, $\mathbf{e_{ii_3}}$. And in step two, the position of the neighboring node $j$ can be reconstructed by using $\mathbf{e_{ji}}$. The step three and four can be done in the same way following the demonstration in Fig.~\ref{fig:complete}. 

We randomly use 500 crystal structures from the Materials Project, and convert them to the proposed SE(3) invariant and SO(3) equivariant crystal graphs using radius $r$ determined by the 16-th nearest neighbors for every atom, and then use above reconstruction steps to recover the crystal structures from graphs. We then compare the structure differences between the recovered ones and the input ones using Pymatgen. As shown in Table.~\ref{complete-verify}, the input crystal structures are fully recovered, with limited structure error (less than $10^{-6}$).

We also extend the completeness verification to the dataset of T2, where a crystal cell can have more than 7 hundred atoms. We use the test set of T2 dataset and convert crystals to the proposed SE(3) invariant and SO(3) equivariant crystal graphs using radius $r$ determined by the 25-th nearest neighbors for every atom, and then use above reconstruction steps to recover the crystal structures from graphs. As shown in Table.~\ref{complete-verify-2}, the input crystal structures, which can contain more than 700 atoms, are fully recovered with only 25 neighbors, with limited structure error.

\begin{table}[h!]
  \caption{Verification of geometric completeness of proposed SE(3) invariant and SO(3) equivariant crystal graphs by reconstructing crystal structures from graph representations.}
  \label{complete-verify}
  \centering
  \begin{tabular}{l|c|c}
    \toprule
     Structure Differences & Rooted mean square deviation & Mean of largest point-wise distance  \\
    \midrule
    SE(3) invariant & $3.28* 10^{-7}$ & $4.87* 10^{-7}$ \\
    SO(3) equivariant & $3.03* 10^{-7}$ & $4.48* 10^{-7}$ \\
    \bottomrule
  \end{tabular}
\end{table}

\begin{table}[h!]
  \caption{Verification of geometric completeness of proposed SE(3) invariant and SO(3) equivariant crystal graphs by reconstructing crystal structures from graph representations - T2 test set.}
  \label{complete-verify-2}
  \centering
  \begin{tabular}{l|c|c}
    \toprule
     Structure Differences & Rooted mean square deviation & Mean of largest point-wise distance  \\
    \midrule
    SE(3) invariant & $3.83* 10^{-7}$ & $2.38* 10^{-6}$ \\
    SO(3) equivariant & $3.19* 10^{-7}$ & $2.22* 10^{-6}$ \\
    \bottomrule
  \end{tabular}
\end{table}

% \subsubsection{Geometric completeness of SO(3) equivariant crystal graphs} \label{app:2.2}

% The geometric completeness of the proposed SO(3) equivariant crystal graph can be proven in a similar way as the SE(3) invariant crystal graphs. 

\subsection{Usage of geometric information in ComFormer} \label{app:3}

In this section, we discuss the usage of geometric information in iComFormer and eComFormer. Specifically, we show that by the proper design of tensor product layers, the output $\boldsymbol{f}_{i,updated}^{l}$ of node-wise equivariant updating layer considers all two-hop bond angle information $\sum_{j \in \mathcal{N}_i} \sum_{m \in \mathcal{N}_j} \text{COS}(\theta_{mj,ji})$ in $\sum_{j \in \mathcal{N}_i} \textbf{TP}_0(\boldsymbol{f}_{j,1}^{l}, \bold{Y}_1(\hat{\mathbf{e_{ji}}}))$.

\textbf{iComFormer}. To begin with, the geometrically complete SE(3) invariant crystal graphs are used as the input of iComFormer. For the iComFormer, the Euclidean distance $||\mathbf{e_{ji}}||_2$ is considered in the node-wise and edge-wise transformer layer, and three bond angles $\theta_{ji,ii_1}$, $\theta_{ji,ii_2}$, $\theta_{ji,ii_3}$ are considered in the edge-wise transformer layer. Therefore, iComFormer considers expressive geometric information of crystals during message passing.

\textbf{eComFormer}. The input of eComFormer is the proposed SO(3) equivariant crystal graph. The key question to answer is, how we consider expressive geometric information in eComFormer layers. The proof that the updated node feature $\boldsymbol{f}_{i,updated}^{l}$ considers bond angle information $\sum_{j \in \mathcal{N}_i} \sum_{m \in \mathcal{N}_j} \text{COS}(\theta_{mj,ji})$ is as follows.

First, $\sum_{j \in \mathcal{N}_i} \textbf{TP}_0(\boldsymbol{f}_{j,1}^{l}, \bold{Y}_1(\hat{\mathbf{e_{ji}}}))$ is a component in  $\boldsymbol{f}_{i,updated}^{l}$. If $\sum_{j \in \mathcal{N}_i} \textbf{TP}_0(\boldsymbol{f}_{j,1}^{l}, \bold{Y}_1(\hat{\mathbf{e_{ji}}}))$ considers bond angle information $\sum_{j \in \mathcal{N}_i} \sum_{m \in \mathcal{N}_j} \text{COS}(\theta_{mj,ji})$, then the updated invariant node feature considers $\sum_{j \in \mathcal{N}_i} \sum_{m \in \mathcal{N}_j} \text{COS}(\theta_{mj,ji})$.

We have
\begin{align}
    \sum_{j \in \mathcal{N}_i} \textbf{TP}_0(\boldsymbol{f}_{j,1}^{l}, \bold{Y}_1(\hat{\mathbf{e_{ji}}})) = & \sum_{j \in \mathcal{N}_i} \textbf{TP}_0(\frac{1}{|\mathcal{N}_i|}\sum_{m \in \mathcal{N}_j} \textbf{TP}_1(\boldsymbol{f}_m^{l'}, \bold{Y}_1(\hat{\mathbf{e_{mj}}})), \bold{Y}_1(\hat{\mathbf{e_{ji}}})) \\
    = & \frac{1}{|\mathcal{N}_i|}\sum_{j \in \mathcal{N}_i} \textbf{TP}_0(\sum_{m \in \mathcal{N}_j} 
    \textbf{TP}_1(\boldsymbol{f}_m^{l'}, \bold{Y}_1(\hat{\mathbf{e_{mj}}})), \bold{Y}_1(\hat{\mathbf{e_{ji}}}))\\
    = & \frac{1}{|\mathcal{N}_i|}\sum_{j \in \mathcal{N}_i} \textbf{TP}_0(\sum_{m \in \mathcal{N}_j} 
    \textbf{TP}_1(\boldsymbol{f}_m^{l'}, \bold{Y}_1(\hat{\mathbf{e_{mj}}})), c_1\hat{\mathbf{e_{ji}}})\\
    = & \frac{1}{|\mathcal{N}_i|}\sum_{j \in \mathcal{N}_i} \textbf{TP}_0(\sum_{m \in \mathcal{N}_j} 
    v_m\hat{\mathbf{e_{mj}}}, c_1\hat{\mathbf{e_{ji}}}),
\end{align}
where $\textbf{TP}_1(\boldsymbol{f}_m^{l'}, \bold{Y}_1(\hat{\mathbf{e_{mj}}})) = v_m\hat{\mathbf{e_{mj}}}$, and $v_m$ conditioned on the invariant node feature $\boldsymbol{f}_m^{l'}$ of node $m \in \mathcal{N}_j$, and learnable parameters (path weight) in $\textbf{TP}_1$ . Here for simplicity, we only show the one output channel case of $\textbf{TP}_1$, and in the implementation, the output can be multi-channel. Given two vector input $\mathbf{v_{1}} \in \mathbb{R}^3, \mathbf{v_{2}} \in \mathbb{R}^3$, we have
\begin{align}
    \textbf{TP}_0(\mathbf{v_{1}}, \mathbf{v_{2}}) = & v_{\text{TP}_0} *(\mathbf{v_{1,x}}*\mathbf{v_{2,x}} + \mathbf{v_{1,y}}*\mathbf{v_{2,y}}+ \mathbf{v_{1,z}}*\mathbf{v_{2,z}})\\
    = & v_{\text{TP}_0} * \text{COS}(\theta_{v_1, v_2}) * ||\mathbf{v_{1}}||_2 * ||\mathbf{v_{2}}||_2,
\end{align}
where $v_{\text{TP}_0}$ is a learnable scalar value in $\textbf{TP}_0$, and COS describes cosine function. And we have $\textbf{TP}_0(c\mathbf{v_{1}} + \mathbf{v_{3}}, \mathbf{v_{2}}) = c\textbf{TP}_0(\mathbf{v_{1}}, \mathbf{v_{2}}) + \textbf{TP}_0(\mathbf{v_{3}}, \mathbf{v_{2}})$ due to the bi-linear property of $\textbf{TP}_0$.
Based on these, we can further have,
\begin{align}
    \sum_{j \in \mathcal{N}_i} \textbf{TP}_0(\boldsymbol{f}_{j,1}^{l}, \bold{Y}_1(\hat{\mathbf{e_{ji}}})) = & \frac{1}{|\mathcal{N}_i|}\sum_{j \in \mathcal{N}_i} \textbf{TP}_0(\sum_{m \in \mathcal{N}_j} 
     v_m\hat{\mathbf{e_{mj}}}, c_1\hat{\mathbf{e_{ji}}}) \\
     = & \frac{1}{|\mathcal{N}_i|}\sum_{j \in \mathcal{N}_i} \sum_{m \in \mathcal{N}_j} v_{\text{TP}_0} * v_m * c_1 * \text{COS}(\theta_{mj,ji})
\end{align}
Thus, $\sum_{j \in \mathcal{N}_i} \textbf{TP}_0(\boldsymbol{f}_{j,1}^{l}, \bold{Y}_1(\hat{\mathbf{e_{ji}}}))$ considers all two hop bond angle information. 
% It can be seen that any small angle disturb in the local structure of node $i$ will change the value of $\sum_{j \in \mathcal{N}_i} \textbf{TP}_0(\boldsymbol{f}_{i,1}^{l}, \bold{Y}_1(\hat{\mathbf{e_{ji}}}))$ for the center node $i$. Two input SO(3) equivariant crystal graphs will have the same $\sum_{j \in \mathcal{N}_i} \textbf{TP}_0(\boldsymbol{f}_{i,1}^{l}, \bold{Y}_1(\hat{\mathbf{e_{ji}}}))$ for every node $i$ in the graph only if all the angles are the same. 
And therefore it considers all angle information $\theta_{j_1j,ji}$, $\theta_{j_2j,ji}$, $\theta_{j_3j,ji}$ in the invariant crystal graph representation due to $j_1, j_2, j_3 \in \mathcal{N}_j$.
Beyond this, the node-wise transformer layer captures geometric information of $||\mathbf{e_{ji}}||_2$, which further captures disturbs in pairwise Euclidean distances.

\begin{figure}[t]
    \centering
    \includegraphics[width=0.99\linewidth]{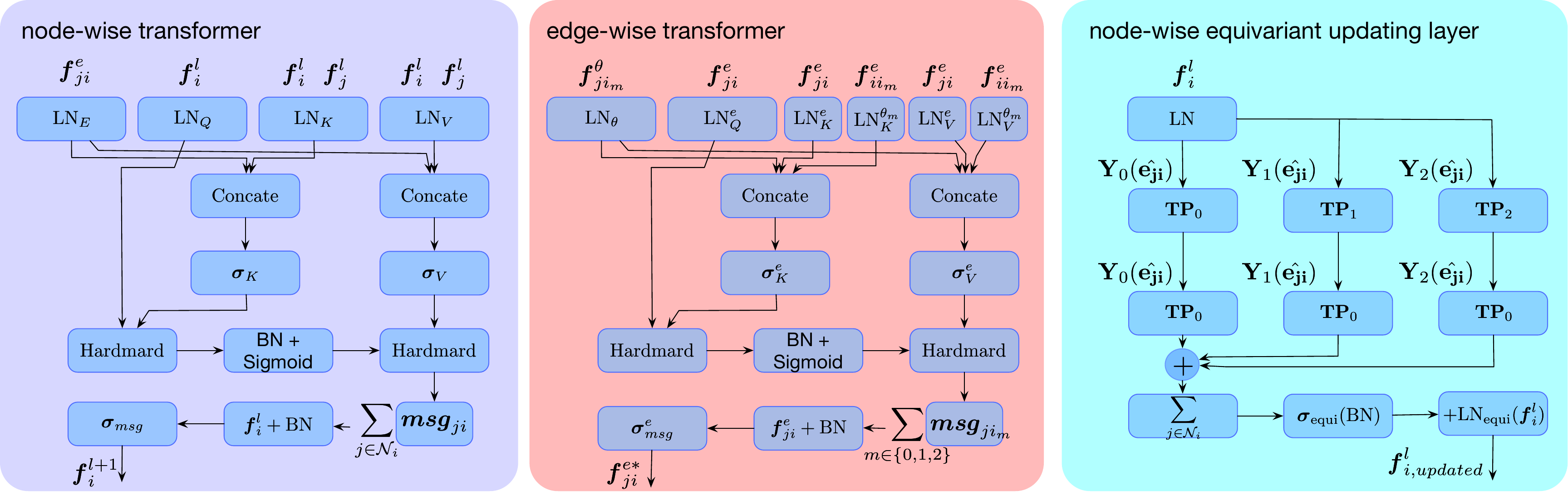}
    \caption{
    Illustration of the detailed network architectures of the proposed node-wise transformer layer, edge-wise transformer layer, and node-wise equivariant updating layer.
    }
    \label{fig:detailed_net}
\end{figure}

To sum up, the node-wise equivariant updating layer contains bond angle information between $\mathbf{e_{ji}}$ and $\mathbf{e_{kj}}$, where $j \in \mathcal{N}_i$ and $k \in \mathcal{N}_j$, and therefore considers all 2-hop angle information. 
And the node-wise transformer layer contains edge lengths $||\mathbf{e_{ji}}||_2$. By combining them together, eComFormer considers expressive geometric information. And if lattice representation including $i_1, i_2, i_3$ is not encoded as the neighbor of node $i$, the infinite crystal structure, especially the periodic patterns, cannot be fully captured.

\subsection{Model settings of ComFormer} \label{app:4}

We provide detailed model settings of iComFormer and eComFormer, and demonstrate the hyperparameter settings of them for different datasets and tasks in this section.

\textbf{Detailed network architectures}. We first show the detailed network architectures of the proposed node-wise transformer layer, edge-wise transformer layer, and node-wise equivariant updating layer in Fig.~\ref{fig:detailed_net}. For all the experiments across three crystal datasets, we use two node-wise transformer layers, and one node-wise equivariant updating layer in between to form the eComFormer, and use one node-wise transformer layer, followed by one edge-wise transformer layer, and then $l-1$ node-wise transformer layers to form the $l$ layer iComFormer.

\textbf{Settings of embedding layers of ComFormer}. iComFormer and eComFormer both embed node feature $\boldsymbol{a}_1$ (atomic number) to a CGCNN embedding vector of length 92 and then mapped to a 256 dimensional initial node feature $\boldsymbol{f}_i^0$ by a linear transformation. iComFormer and eComFormer both embed invariant edge feature $||\mathbf{e_{j'i}}||_2$ to $-0.75/||\mathbf{e_{j'i}}||_2$, and then increase it to 256 dimensional vector by RBF kernels with 256 center value from -4.0 to 0.0, after that, both of them transfer the 256 dimensional vector to initial edge feature $\boldsymbol{f}_{j'i}^e$ by a linear transformation layer and a softplus activation layer. iComFormer embeds the invariant edge feature $\theta_{j'i,ii_1}$, $\theta_{j'i,ii_2}$, and $\theta_{j'i,ii_3}$ to $\boldsymbol{f}_{j'i_1}^\theta$, $\boldsymbol{f}_{j'i_2}^\theta$, and $\boldsymbol{f}_{j'i_3}^\theta$ by taking the consine values and RBF kernels with 256 center values from -1.0 to 1.0, followed by a linear layer and a softplus activation layer. eComFormer embeds the equivariant edge feature $\mathbf{e_{j'i}}$ to spherical harmonics with rotation order 0, 1, and 2 by using e3nn~\citep{geiger2022e3nn} library.

\textbf{Settings of node-wise transformer layer}. $\text{LN}_Q, \text{LN}_K, \text{LN}_V, \text{LN}_E$ are linear transformation layers that map 256 dimensional input features to 256 dimensional output features. $\boldsymbol{\sigma}_{K}, \boldsymbol{\sigma}_{V}$ are the nonlinear transformations for key and value, including one linear layer that maps the concatenated $256 * 3$ dimensional input features to $256$ dimensional output features, one silu activation layer, and one linear layer that maps the 256 dimensional input features to 256 dimensional output features. $\boldsymbol{\sigma}_{msg}$ is the softplus activation layer.

\textbf{Settings of edge-wise transformer layer}. Similar to node-wise transformer layer, linear transformations including $\text{LN}_Q^e, \text{LN}_K^e, \text{LN}_V^e, \text{LN}_K^{\theta_m}, \text{LN}_V^{\theta_m},\text{LN}_{\theta}$ map 256 dimensional input features to 256 dimensional output features. $\boldsymbol{\sigma}_{K}^e, \boldsymbol{\sigma}_{V}^e$ are the nonlinear transformations for key and value, including one linear layer that maps the concatenated $256 * 3$ dimensional input features to $256$ dimensional output features, one silu activation layer, and one linear layer that maps the 256 dimensional input features to 256 dimensional output features. $\boldsymbol{\sigma}_{msg}^e$ is the softplus activation layer.

\textbf{Settings of node-wise equivariant updating layer}. We directly use pherical harmonics $\bold{Y}_0(\hat{\mathbf{e_{ji}}}) = c_0$, $\bold{Y}_1(\hat{\mathbf{e_{ji}}})=c_1 * \hat{\mathbf{e_{ji}}} \in \mathbb{R}^3$ and $\bold{Y}_2(\hat{\mathbf{e_{ji}}}) \in \mathbb{R}^5$ implemented in e3nn. LN transfers input node feature $\boldsymbol{f}_i^l$ to $\boldsymbol{f}_i^{l'}$. $\textbf{TP}_0(\boldsymbol{f}_j^{l'}, \bold{Y}_0(\hat{\mathbf{e_{ji}}}))$ is the e3nn tensor product layer with rotation order 0 output features and 128 channels. $\textbf{TP}_1(\boldsymbol{f}_j^{l'}, \bold{Y}_1(\hat{\mathbf{e_{ji}}}))$ and $\textbf{TP}_2(\boldsymbol{f}_j^{l'}, \bold{Y}_1(\hat{\mathbf{e_{ji}}}))$ are the e3nn tensor product layer with rotation order 1 and 2 output features and 8 channels. $\textbf{TP}_0(\boldsymbol{f}_{j,0}^{l}, \bold{Y}_0(\hat{\mathbf{e_{ji}}})), \textbf{TP}_0(\boldsymbol{f}_{j,1}^{l}, \bold{Y}_1(\hat{\mathbf{e_{ji}}})), \textbf{TP}_0(\boldsymbol{f}_{j,2}^{l}, \bold{Y}_2(\hat{\mathbf{e_{ji}}}))$ are the e3nn tensor product layer with rotation order 0 output features and 128 channels. $\boldsymbol{\sigma}_{\text{equi}}$ is the nonlinear transformation includes a softplus layer, a linear transformation layer that maps 128 dimensional features to 256 dimensional features, and a softplus layer. And $\text{LN}_{\text{equi}}$ is a linear transformation layer that maps 256 dimensional input features to 256 dimensional output features.

\begin{wraptable}{R}{7.0cm}
 \vspace{-0.4cm}
 \caption{Ablation studies. o denotes the max rotation order of features in the equivariant layer.}
  \label{jarvis-ablation}
 \centering
 \resizebox{0.50\columnwidth}{!}{
 \begin{tabular}{lccc}
    \toprule
    Method & node layer & equivariant layer & Test MAE   \\
    eComFormer &  3 & 0 &  30.2 \\
    eComFormer & 3 & 1 (o = 1) & 29.8 \\
    eComFormer &  3 & 1 (o = 2) &  \textbf{28.4}\\
    \bottomrule
  \end{tabular}
 }
\vspace{-0.3cm}
\end{wraptable}
\textbf{Importance of higher rotation order features}. As illustrated in Table~\ref{jarvis-ablation}, the inclusion of equivariant updating layer with features $\sum_{j \in \mathcal{N}_i} \textbf{TP}_0(\boldsymbol{f}_{i,1}^{l}, \bold{Y}_1(\hat{\mathbf{e_{ji}}}))$ and $\sum_{j \in \mathcal{N}_i} \textbf{TP}_0(\boldsymbol{f}_{i,2}^{l}, \bold{Y}_2(\hat{\mathbf{e_{ji}}}))$ with rotation orders one and two consistently enhances the performance of eComFormer from 30.2 to 28.4.

\textbf{Graph level prediction}. After the transformer layers, we aggregate the node features in the graph by mean pooling, and then use one linear layer to map the 256 dimensional graph level feature to 256 dimensional output feature, and then use a silu activation layer, and then map the output to a scalar value by a linear transformation layer.

\subsubsection{Hyperparameter settings of ComFormer for different tasks}
In this subsection, we share the detailed hyperparameter settings of iComFormer and eComFormer for different tasks in these three crystal datasets. We slightly tuned the parameters of our methods, and higher performances are expected if hyperparameters are further tuned for different tasks.

\begin{table}[h!]
\tiny
  \caption{Model settings of iComFormer for JARVIS.}
  \label{jarvis-setting}
  \centering

  \begin{tabular}{l|c|c|c|c}
    \toprule
     & Num. node-wise transformer & Num. edge-wise transformer & Learning rate & Num. neighbors  \\
    \midrule
    formation energy & 4 & 1 & 0.001 & 25 \\
    band gap (OPT) & 4 & 1 & 0.0005 & 25 \\
    band gap (MBJ) & 4 & 1 & 0.001 & 12 \\
    total energy & 4 & 1 & 0.0005 & 25 \\
    Ehull & 4 & 1 & 0.001 & 25 \\
    \bottomrule
  \end{tabular}
\end{table}

\begin{table}[h!]
\tiny
  \caption{Model settings of eComFormer for JARVIS.}
  \label{jarvis-setting-equi}
  \centering

  \begin{tabular}{l|c|c|c|c}
    \toprule
     & Num. node-wise transformer & Num. node-wise equivariant updating & Learning rate & Num. neighbors  \\
    \midrule
    formation energy & 3 & 1 (max order=2) & 0.001 & 25 \\
    band gap (OPT) & 3 & 1 (max order=2) & 0.0005 & 25 \\
    band gap (MBJ) & 2 & 1 (max order=2) & 0.001 & 16 \\
    total energy & 2 & 1 (max order=2) & 0.001 & 16 \\
    Ehull & 3 & 1 (max order=2) & 0.001 & 25 \\
    \bottomrule
  \end{tabular}
\end{table}

\begin{table}[h!]
\tiny
  \caption{Model settings of iComFormer for the Materials Project.}
  \label{mp-setting}
  \centering

  \begin{tabular}{l|c|c|c|c}
    \toprule
     & Num. node-wise transformer & Num. edge-wise transformer & Learning rate & Num. neighbors  \\
    \midrule
    formation energy & 4 & 1 & 0.0007 & 25 \\
    band gap  & 4 & 1 & 0.0005 & 25 \\
    bulk moduli & 3 & 1 & 0.0001 & 16 \\
    shear moduli & 3 & 1 & 0.0001 & 25 \\
    \bottomrule
  \end{tabular}
\end{table}

\begin{table}[h!]
\tiny
  \caption{Model settings of eComFormer for the Materials Project.}
  \label{mp-setting-equi}
  \centering

  \begin{tabular}{l|c|c|c|c}
    \toprule
     & Num. node-wise transformer & Num. node-wise equivariant updating & Learning rate & Num. neighbors  \\
    \midrule
    formation energy & 3 & 1 (max order=2) & 0.001 & 25 \\
    band gap  & 2 & 1 (max order=2) & 0.001 & 16 \\
    bulk moduli & 2 & 1 (max order=2) & 0.001 & 16 \\
    shear moduli & 2 & 1 (max order=2) & 0.001 & 16 \\
    \bottomrule
  \end{tabular}
\end{table}

\begin{table}[h!]
\tiny
  \caption{Model settings of iComFormer for MatBench.}
  \label{mat-setting}
  \centering

  \begin{tabular}{l|c|c|c|c}
    \toprule
     & Num. node-wise transformer & Num. edge-wise transformer & Learning rate & Num. neighbors  \\
    \midrule
    e\_form & 4 & 1 & 0.001 & 25 \\
    % perovskites  & 5 & 1 & $1e^{-4}$ & 25 \\
    % dielectric & 4 & 1 & $1e^{-4}$ & 16 \\
    jdft2d & 4 & 1 & 0.0001 & 16 \\
    \bottomrule
  \end{tabular}
\end{table}

\begin{table}[h!]
\tiny
  \caption{Model settings of eComFormer for MatBench.}
  \label{mat-setting-equi}
  \centering

  \begin{tabular}{l|c|c|c|c}
    \toprule
     & Num. node-wise transformer & Num. node-wise equivariant updating & Learning rate & Num. neighbors  \\
    \midrule
    e\_form & 2 & 1 (max order=2) & 0.001 & 25 \\
    % perovskites  & 2 & 1 (max order=2) & $1e^{-4}$ & 16 \\
    % dielectric & 2 & 1 (max order=2) & $1e^{-4}$ & 16 \\
    jdft2d & 2 & 1 (max order=2) & 0.0001 & 16 \\
    \bottomrule
  \end{tabular}
\end{table}

\textbf{JARVIS}. We show the model settings of iComFormer and eComFormer in Tables \ref{jarvis-setting} and \ref{jarvis-setting-equi}. Specifically, iComFormer is trained using MSE loss with Adam~\citep{kingma2015adam} optimizer for the tasks of MBJ bandgap and Ehull for 500 epochs, and formation energy for 700 epochs, with Onecycle~\citep{onecycle} scheduler and pct\_start of 0.3.
For OPT bandgap and total energy, iComFormer are trained using L1 loss with Adam~\citep{kingma2015adam} optimizer and polynomial scheduler with starting learning rate 0.0005 and final learning rate 0.00001 for 800 epochs.
eComFormer is trained using MSE loss with Adam~\citep{kingma2015adam} optimizer for all tasks for 500 epochs except OPT bandgap, with Onecycle~\citep{onecycle} scheduler and pct\_start of 0.3.
For OPT bandgap, eComFormer is trained using L1 loss with Adam~\citep{kingma2015adam} optimizer and polynomial scheduler with starting learning rate 0.0005 and final learning rate 0.00001 for 800 epochs.
The number of neighbors $k$ indicates the $k$-th nearest distance we use as the radius for node $i$.

\textbf{The Materials Project}. We show the model settings of iComFormer and eComFormer in Table \ref{mp-setting} and \ref{mp-setting-equi}. Except bandgap of iComformer, all iComFormer and eComFormer models for the Materials Project are trained for 500 epochs using MSE loss with Adam~\citep{kingma2015adam} optimizer with Onecycle~\citep{onecycle} scheduler and pct\_start of 0.3. For bandgap, iComformer is trained using L1 loss with Adam~\citep{kingma2015adam} optimizer and polynomial scheduler with starting learning rate 0.0005 and final learning rate 0.00001 for 500 epochs.

\textbf{MatBench}. We show the model settings of iComFormer and eComFormer in Table \ref{mat-setting} and \ref{mat-setting-equi}. All iComFormer and eComFormer models for the Materials Project are trained for 500 epochs using MAE loss with Adam~\citep{kingma2015adam} optimizer. And we use Onecycle~\citep{onecycle} scheduler and pct\_start of 0.3. 

\subsubsection{Efficiency analysis when increasing $k$}
\label{app:efficiency_k}

We provide further efficiency analysis for our proposed iComFormer and eComformer when increasing $k$, where $k$ indicates that we use $k$-th smallest distance from neighbors to the center to serve as the radius for node $i$.

We show the running time per epoch on the task of JARVIS formation energy for different ComFormer variants and different k values in Fig.~\ref{fig:efficiency_k}. It can be seen that the running time increases linearly for all ComFormer variants when increasing k.

\begin{figure}[t]
    \centering
    \includegraphics[width=0.65\linewidth]{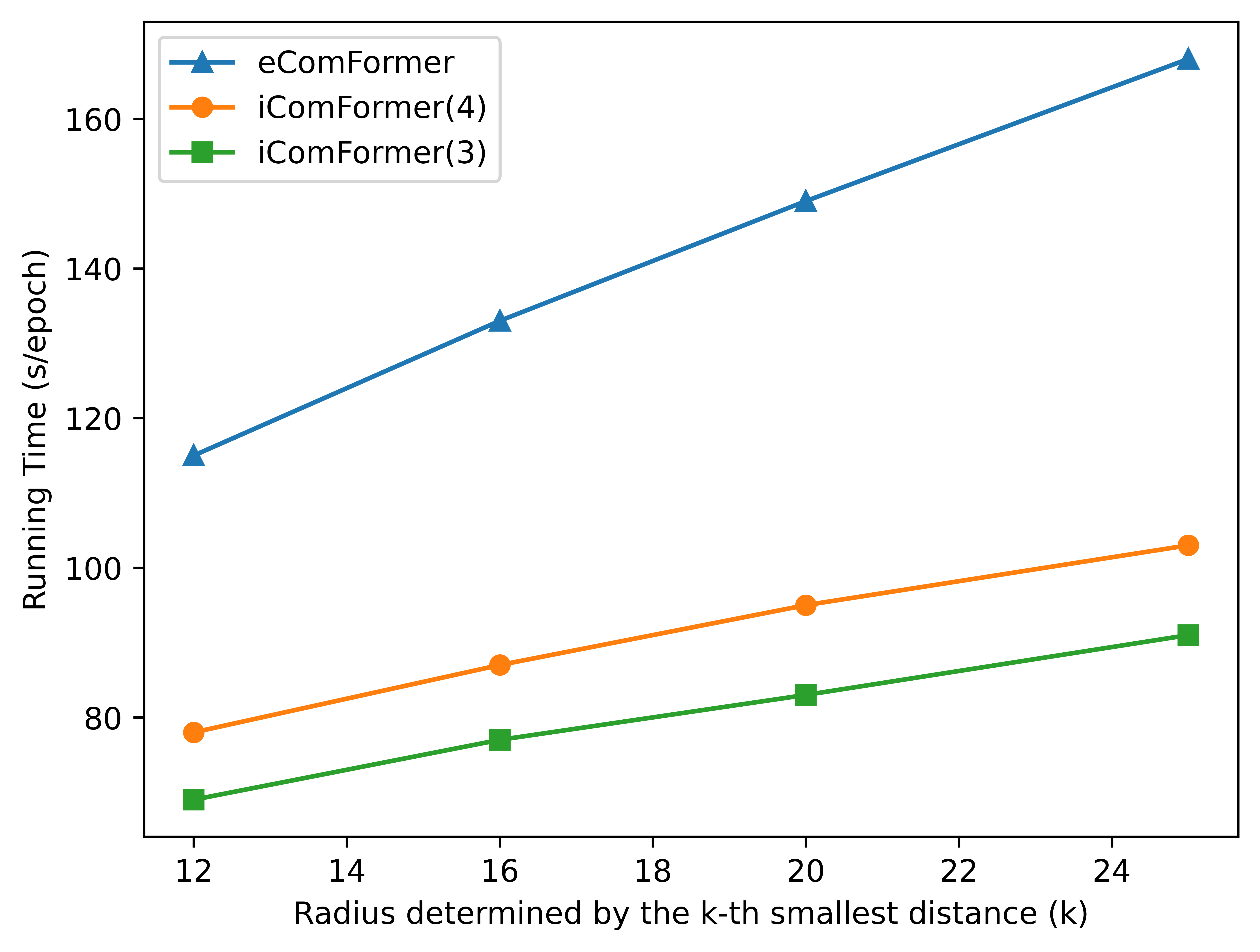}
    \caption{
    Efficiency analysis for iComFormer and eComformer when increasing $k$, where $k$ indicates the $k$-th smallest distance used as the radius for node $i$. We show the running time per epoch on the task of JARVIS formation energy. 
    }
    \label{fig:efficiency_k}
\end{figure}

\subsection{Dataset details} \label{app:5}

We provide more dataset details for the Materials Project, JARVIS, and MatBench in this section.

\textbf{JARVIS}. JARVIS-2021.8.18 is first used by ALIGNN~\citep{alignn}. We follow the experimental settings of Matformer~\citep{yan2022periodic} and PotNet~\citep{potnet} and evaluate our methods on five important crystal property prediction tasks. The training, evaluating, and testing sets for formation energy, total energy, and bandgap(OPT) prediction tasks contain 44578, 5572, and 5572 crystals while contain 44296, 5537, and 5537 crystals for Ehull, and contain 14537, 1817, 1817 for bandgap(MBJ). We follow previous works and use mean absolute error as the evaluation metric. We directly use the benchmark results from Matformer~\citep{yan2022periodic} and PotNet~\citep{potnet}. For the efficiency comparisons with previous works, we directly follow the efficiency comparison settings of Matformer~\citep{yan2022periodic} and use a single RTX A6000 to test the running time per epoch. Among these crystal structures, 18865 crystals are experimentally observed.

\begin{table}[h!]
  \caption{MP experimentally observed dataset.}
  \label{mat-exp-ob}
  \centering
  \begin{tabular}{l|c|c}
    \toprule
     Method & Number of training crystals & Test MAE  \\
    \midrule
    CGCNN (k=12) & 29342 & 0.051  \\
    PDD-best (k=12) & 29342 & 0.047  \\
    AMD & 24067 & 0.78 \\
    iComformer (k=12) & 24067 & \textbf{0.027}  \\
    \bottomrule
  \end{tabular}
\end{table}

\textbf{The Materials Project}. The Materials Project-2018.6.1 was first proposed and used by MEGNET~\citep{megnet} collected from The Materials Project~\citep{jain2013commentary_mp}, but the methods are compared by using different random seeds and dataset sizes. To make a fair comparison, \citet{yan2022periodic} re-train all baseline methods using the same data splits. We follow the experimental settings and data splits of Matformer~\citep{yan2022periodic} and PotNet~\citep{potnet}. Specifically, the formation energy and bandgap prediction tasks contain 60000, 5000, and 4239 crystals for training, evaluating, and testing, respectively. And the shear and bulk moduli tasks contain 4664, 393, and 393 crystals for training, evaluating, and testing, respectively. Note that one validation sample in Shear Moduli is removed due to negative GPa indicating the underlying unstable/metastable crystal structure. The evaluation metric used is mean absolute error. We directly use the benchmark results from Matformer~\citep{yan2022periodic} and PotNet~\citep{potnet}. Among these crystal structures, 30084 crystals are experimentally observed. We also further conduct experiments on the pure experimentally observed crystal dataset with 30084 crystals, and use random splits 80\%, 10\%, 10\% to form the training, evaluating, and test sets, and compare with CGCNN, PDD, and AMD as shown in Table~\ref{mat-exp-ob}.

\textbf{MatBench}. MatBench~\citep{matbench} is a crystal property prediction benchmark, containing tasks with various data scales. To evaluate the robustness of our proposed methods, we conduct experiments on e\_form with 132752 crystals
% , perovskites with 18928 crystals, dielectric with 4764 crystals, 
and jdft2d with 636 crystals. It is worth noting that jdft2d is a 2D crystal dataset, curated by Matbench. The traditional way (also in Matbench) to represent 2D structure is setting $||l_3||_2 > 10$ Angstroms and not in the same plane of  $l_1$, $l_2$. We directly use the official MatBench package and guidelines to run these tasks, and report the benchmark results using metric mean absolute error (MAE) and root mean square error (RMSE) with error bars indicating the standard deviation of five benchmark runs. The benchmark results of our methods will be updated to MatBench in the future. We directly use the benchmark results from MatBench.

% \subsection{Limitations and future works} \label{app:6}

% The limitations of our ComFormer are as follows. First, ComFormer is specifically designed for crystalline structures and cannot be directly applied to other domains, for example, modeling molecules and proteins. Second, the current ComFormer is designed for crystal properties with scalar values (e.g. formation energy, band gap, etc.). We plan to explore other vector and tensor properties of crystals in future, for examples, atomic forces (rank 1), dielectric tensors (rank 2), piezoelectric coefficients (rank 3), elasticity tensor (rank 4), etc. Third, the current ComFormer cannot model the complete geometric information of amorphous materials, which is also an interesting and important future direction.

\subsection{Discussions of potential corner cases} \label{app:potential_corner}

In this section, we discuss the potential corner cases about geometric completeness of the proposed crystal graphs, and potential corner cases due to the usage of the nearest neighbor algorithm, to aid future users when they deploy the proposed method for their own applications.

\subsubsection{Potential corner cases of geometric completeness}

To achieve geometric completeness for crystalline materials, the only assumption we need is the constructed crystal graph is strongly connected, which means for any node pair $i, j$, where $i, j$ represent the index of atoms in the unit cell, there is a path from $i$ to $j$. 

\textbf{Hypothesis corner cases}. We then demonstrate the hypothesis cases that the constructed graph is not strongly connected and show how to tackle this issue. As shown in Fig.~\ref{fig:potential_corner_1}, if there are several groups that are far away from each other and the radius is not large enough, the constructed graph will not be strongly connected. It is worth noting that for a not strongly connected graph, no previous graph representations can achieve completeness, because of the fact that no geometric information between groups is captured.

\textbf{Solution}. Although this kind of hypothesis crystalline material may not exist in reality, a solution for this Hypothesis case will be to increase the radius or increase the value of $k$ (when you determine radius $r$ by the $k$-th smallest Euclidean distances from neighbors to the center node).

\begin{figure}
    \centering
    \includegraphics[width=0.95\textwidth]{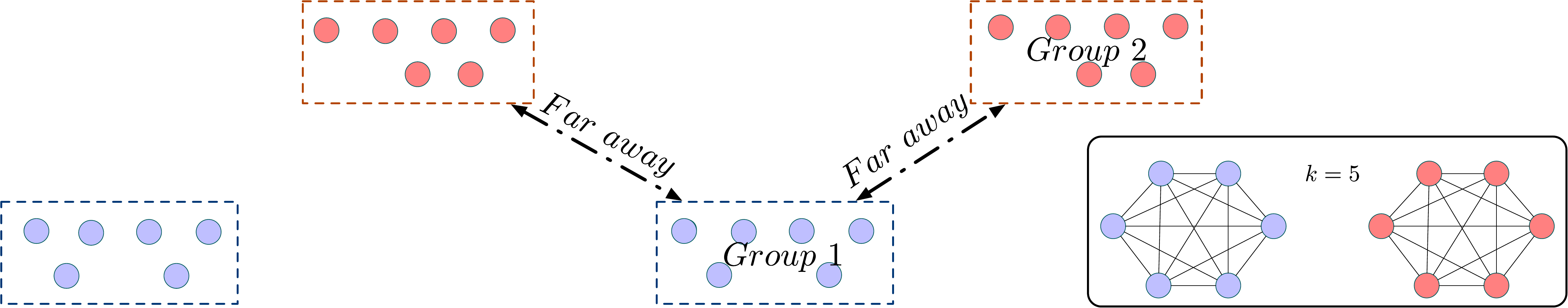}
    \caption{Demonstration of the hypothesis corner case that the constructed graph is not strongly connected. In this hypothesis case, the minimum unit cell structure contains two groups of atoms, marked by blue and pink. These two groups of atoms are far away from each other. We construct crystal graphs using radius $r$ determined by the $k$-th smallest Euclidean distances from neighbors to the center node. For this corner case, if the $k=5$, there will be no connections between these two groups of nodes, resulting in a not strongly connected graph. A solution to make the graph strongly connected is to increase the value of $k$ and make $k > 5$ (e.g., $k=6$ will result in a strongly connected graph).}
    \label{fig:potential_corner_1}
\end{figure}

\subsubsection{Potential corner cases of using nearest neighbors}

When a reference node is needed, a widely used E(3) invariant way to choose the reference node is using Euclidean distance and nearest neighbors. 
% To construct the lattice representation $\{\mathbf{e_{ii_1}}, \mathbf{e_{ii_2}}, \mathbf{e_{ii_3}}\}$ for every atom in a crystalline material, we employ the nearest neighbors. 
Similar to SphereNet~\citep{liu2021spherical} and ComENet~\citep{wang2022comenet} which employ the nearest neighbor algorithm to choose reference nodes, our method also uses the Euclidean distance based nearest neighbor algorithm to determine the lattice representations to achieve rotation and translation invariance.

\textbf{Potential corner case}. When several periodic duplicates of the atom in a given crystal share the same Euclidean distances, e.g., $||\mathbf{e_{ii_1}}||_2 =  ||\mathbf{e_{ii_2}}||_2=||\mathbf{e_{ii_3}}||_2$, these duplicates will share the equal possibility to be selected as the reference node. 

\textbf{Solution and discussion}. This issue occurs for algorithms using nearest neighbors, including SphereNet~\citep{liu2021spherical}, ComENet~\citep{wang2022comenet}, CGCNN~\citep{cgcnn}, and GATGNN~\citep{gatgnn}. If the nearest neighbor algorithm employed is deterministic, there will be a unique graph for the corner case. If the nearest neighbor algorithm employed is not deterministic, these several different representations of the corner case will occur with equal possibility during the training process (similar to data augmentation). 

If the user plans to further restrict the result of the nearest neighbor algorithm, additional information can be used to select from these candidates with the same Euclidean distances. In any case, this issue will not influence the completeness of the constructed representation. 
% \newpage

% \begin{figure}[t]
%     \centering
%     \includegraphics[width=0.4\linewidth]{}
%     \caption{
%     Illustration of the group of unit cell structures that are equivalent due to periodic invariance. We use red rectangles to mark the unit cell structures $(A, P, L)$ for the same infinite crystal structure. We use $p_1, p_2, \cdots, p_5$ to mark the left-bottom point of the lattice structure. We also use $\sigma_2$ to mark a different lattice shape with the same volume as others. It can be seen that the same infinite crystal structure can have a lot of different unit cell structures $(A, P, L)$ when moving from $p_1$ to $p_2, p_3, p_5$, or changing the lattice shape to $\sigma_2$.
%     }
%     \label{fig:rebuttal1}
% \end{figure}

\end{document}